\documentclass{article}
\usepackage{iclr2027_conference,times}

\usepackage{microtype}
\usepackage{graphicx}
\usepackage{wrapfig}
\usepackage{placeins}
\usepackage{subcaption}
\usepackage{booktabs}
\usepackage{multirow}
\usepackage{xcolor}
\usepackage{hyperref}
\usepackage{bookmark}
\usepackage{url}
\usepackage{amsmath}
\usepackage{amssymb}
\usepackage{mathtools}
\usepackage{amsthm}
\usepackage{longtable}
\usepackage{pifont}
\usepackage[capitalize,noabbrev]{cleveref}
\usepackage[most]{tcolorbox}

\definecolor{takeawaygreen}{RGB}{81,116,52}
\definecolor{questionblue}{RGB}{69,148,193}
\definecolor{questionblue2}{RGB}{240,246,250}
\definecolor{imporvement}{RGB}{226,138,138}
\definecolor{decrease}{RGB}{107,158,219}
\newtcolorbox{takeaway}{
  enhanced,
  breakable,
  colback=white,
  colframe=takeawaygreen,
  boxrule=0pt,
  toprule=1pt,
  bottomrule=1pt,
  left=0pt,
  right=0pt,
  top=3pt,
  bottom=3pt,
}
\newtcolorbox{questionbox}{
  enhanced,
  breakable,
  colback=questionblue2,
  colframe=questionblue,
  boxrule=0pt,
  leftrule=4pt,
  toprule=0pt,
  bottomrule=0pt,
  rightrule=0pt,
  left=10pt,
  right=8pt,
  top=3pt,
  bottom=3pt,
}
\theoremstyle{plain}
\newtheorem{theorem}{Theorem}[section]
\newtheorem{proposition}[theorem]{Proposition}

\theoremstyle{definition}

\theoremstyle{remark}

\tcbuselibrary{skins}

\title{Understanding Gene Expression-to-Image Learning: Spatial Structure and Information Fidelity}

\author{
Xiaoqing Lian$^{1}$, Pei Liu$^{1}$, Jiahui Tan$^{1}$, Quan Zou$^{2}$, Chen Lin$^{3}$, Xiangxiang Zeng$^{4,5}$ \\
$^{1}$ College of Computer Science and Electronic Engineering, \\
\hspace*{1.5em} Hunan University, Changsha, China \\
$^{2}$ University of Electronic Science and Technology of China, \\
\hspace*{1.5em} Chengdu, China \\
$^{3}$ Xiamen University, Xiamen, China \\
$^{4}$ State Key Laboratory of Chemo and Biosensing, \\
\hspace*{1.5em} Institute of AI4Science, Hunan University, \\
\hspace*{1.5em} Changsha 410082, P. R. China \\
$^{5}$ Yuelushan Laboratory, China
}

\iclrfinalcopy

\begin{document}
\maketitle
\begin{abstract}
Single-cell RNA sequencing (scRNA-seq) enables cell-level analysis of biological heterogeneity, but learning effective representations from high-dimensional, unordered gene expression profiles remains challenging. Gene Expression-to-Image (GE2I) offers an effective learning strategy by arranging gene values on pixel grids for convolutional neural networks, but its underlying mechanisms remain unclear. We study GE2I on full-transcriptome data while keeping the network and training procedure fixed. Our results reveal a trade-off between spatial smoothness and gene-level fidelity. Structured mappings produce smooth, low-frequency patterns that support faster early optimization, whereas assigning multiple genes to the same pixel obscures expression differences and can reduce predictive performance. This trade-off explains why random mappings can compete with structured layouts when many genes are represented. One-to-one gene-to-pixel assignment improves prediction across the tested visual backbones even when image smoothness decreases. We further find that the benefits of pretraining and augmentation depend on the backbone and mapping. These findings provide practical guidance for pixel allocation and the transfer of vision priors to gene expression data.

\end{abstract}
\section{Introduction}

\begin{figure}[t]
    \centering
    \includegraphics[width=0.72\linewidth]{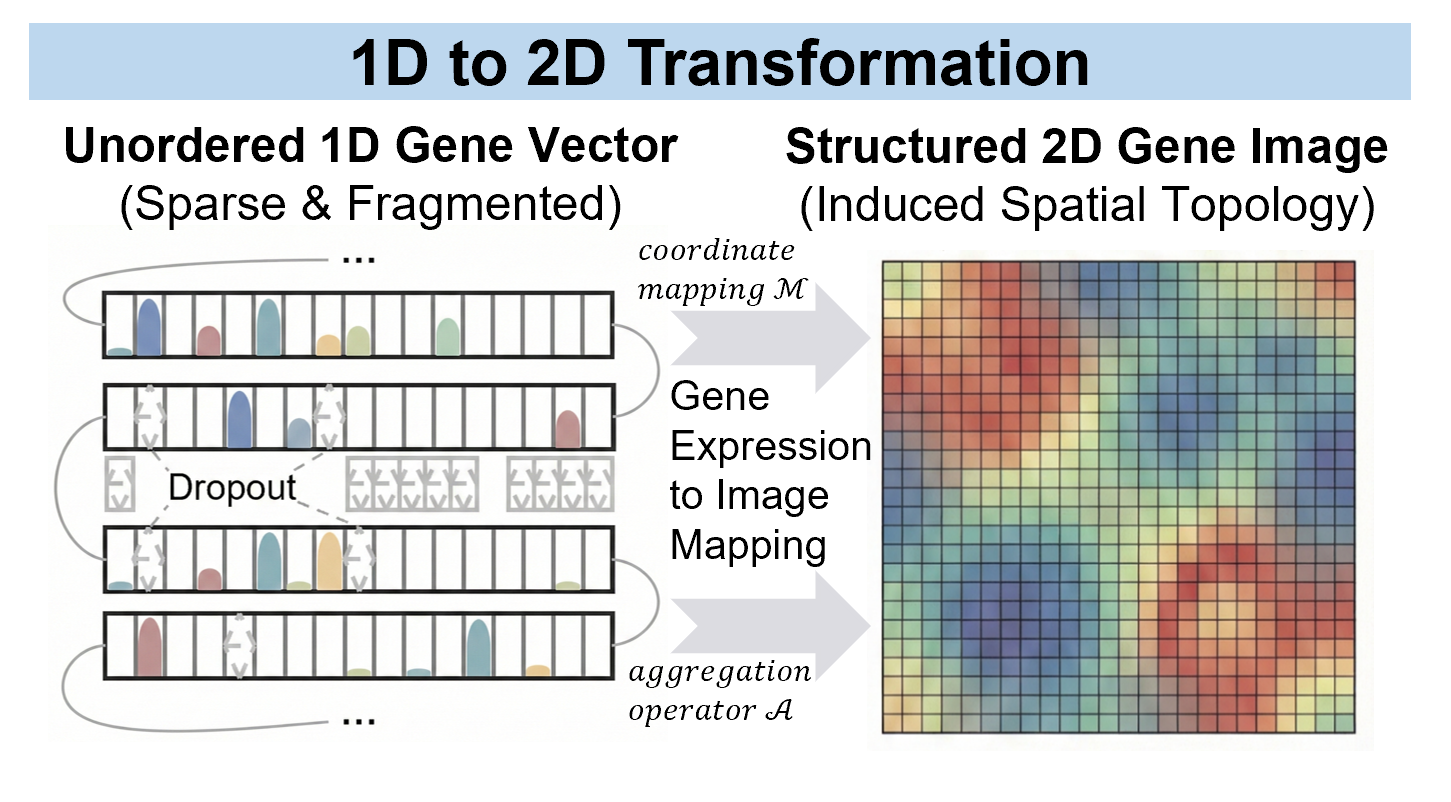}
    \caption{\textbf{Gene expression-to-image (GE2I).} A coordinate mapping $\mathcal{M}$ and aggregation operator $\mathcal{A}$ convert gene expression vectors into 2D images for a CNN. Colored gene labels illustrate local neighborhoods in the layout.}
    \label{Illustration}
\end{figure}
Single-cell RNA sequencing (scRNA-seq) measures gene expression in individual cells and helps researchers study how cells differ from one another~\citep{bunne2024build,eraslan2022single}. The resulting data form a high-dimensional matrix: each row represents a cell, and each column records the expression level of a gene~\citep{tang2009mrna}. Unlike pixels in an image or words in a sentence, the genes in each row have no natural spatial order. Computational models therefore need a useful way to represent these gene expression vectors~\citep{gunawan2023introduction}.

Common representation learning methods include probabilistic latent-variable models, such as Variational Autoencoders (VAEs)~\citep{kingma2013auto,gronbech2020scvae}, and self-attention-based sequence models, such as Transformers~\citep{vaswani2017attention,cui2024scgpt,zeng2025cellfm}. Another approach places the unordered gene values on a 2D pixel grid (Figure~\ref{Illustration}). We call this approach Gene Expression-to-Image (GE2I). Once the values are arranged as an image, a Convolutional Neural Network (CNN) can learn high-order statistics and nonlinear correlations without manually designed features~\citep{islam2023cartography}. Placing correlated genes together may also combine weak signals into clearer biological patterns. In addition, researchers can inspect the images directly to compare phenotypic differences across samples~\citep{sharma2019deepinsight}.

Most GE2I methods place functionally related genes near one another because they assume that biological neighborhoods improve the representation. Many mapping algorithms are designed around this idea. However, a structured layout changes two things at once: it preserves some gene relationships and gives the data a smoother spatial pattern. It is therefore unclear which property accounts for better learning. We ask:

\begin{questionbox} 
Which properties of gene expression images explain differences in optimization and predictive performance across mappings?
\end{questionbox}

We investigate this question on full-transcriptome data while holding the network and training conditions fixed. We measure image properties, track training behavior, and use beta regression to relate each property to prediction. The results show a trade-off between spatial smoothness and gene-level fidelity. A structured layout can produce patterns that a CNN learns quickly, but placing several genes in the same pixel merges their values and can remove useful differences. Faster early optimization may therefore lead to little or no improvement in final prediction. We test this explanation by changing how genes are assigned to pixels, then use the results to guide image construction and the use of pretrained vision models. Our contributions are:
\begin{questionbox}
\begin{itemize}
\item \textbf{Mechanistic analysis.} Image measurements, training dynamics, and beta regression show how spatial smoothness and gene-level fidelity affect early optimization and final prediction differently.

\item \textbf{Assignment interventions.} We test one-to-one gene-to-pixel assignments and introduce a greedy allocation procedure. Prediction improves even when these changes make the images less smooth.

\item \textbf{Backbone evaluation and transfer guidelines.} We compare assignment effects across visual backbones and test pretraining and augmentation. The results clarify when vision priors help with transcriptomic learning.

\end{itemize}
\end{questionbox}
\section{Related Work}

\subsection{Representation Learning in Single-Cell Genomics}

Representation learning in single-cell genomics aims to capture biological variation in sparse, high-dimensional expression profiles. Variational autoencoder models, including scVI~\citep{lopez2018deep} and scANVI~\citep{xu2021probabilistic}, learn probabilistic cell representations, with scANVI additionally incorporating available cell-type annotations. More recently, foundation models such as scGPT~\citep{cui2024scgpt}, scBERT~\citep{yang2022scbert}, and CellFM~\citep{zeng2025cellfm} have used large-scale pretraining to learn representations for downstream single-cell analysis. These approaches model expression profiles without first assigning genes to a two-dimensional pixel grid. GE2I introduces this spatial representation before applying a visual model, making the arrangement of gene measurements an additional part of representation design.

\subsection{The Gene Expression-to-Image (GE2I) Paradigm}

GE2I maps gene expression values onto two-dimensional grids for CNNs~\citep{sharma2019deepinsight,islam2023cartography}. Existing methods differ in how they construct these layouts. Projection-based approaches such as DeepInsight~\citep{sharma2019deepinsight} derive feature coordinates through dimensionality reduction, while MR-DeepInsight~\citep{sharma2024enhanced} combines multiple image representations. Fotomics~\citep{zandavi2023fotomics} instead uses Fourier transforms to determine feature coordinates. Other methods optimize spatial arrangements according to feature relationships: REFINED~\citep{bazgir2020representation} refines a low-dimensional embedding through local search, IGTD~\citep{zhu2021converting} matches the rankings of feature distances and pixel distances, and Genomap~\citep{islam2023cartography} uses optimal transport to encode gene relationships in a spatial representation.

Pixel allocation and visual training choices have also been considered in prior work. IGTD assigns each feature to a distinct pixel while optimizing neighborhood structure~\citep{zhu2021converting}. MR-DeepInsight uses pretrained CNNs and examines conventional image augmentation~\citep{sharma2024enhanced}. These developments address how to construct and learn from gene images, but comparing final prediction scores alone does not explain how the construction choices affect learning.

Our work complements these methods by examining spatial smoothness and gene-level fidelity. We relate these properties to early optimization and final prediction under shared network and training conditions, with pixel-assignment interventions providing a way to test the explanations.


\section{Analytical Framework}
\label{sec3}

\subsection{Problem Formulation}

Let $\mathbf{x}\in\mathbb{R}^{G}$ be a gene expression vector. A coordinate-based GE2I transformation uses a mapping $\mathcal{M}:\{1,\ldots,G\}\to\Omega$ onto an $H\times W$ grid and a permutation-invariant aggregation operator $\mathcal{A}$, such as the mean. At each occupied pixel, $\mathbf{I}(p)=\mathcal{A}(\{x_i\mid\mathcal{M}(i)=p\})$. The mapping is shared across cells. Here, ``topology'' refers to gene placement and neighborhood structure in the grid. Even when $HW\geq G$, a layout may assign multiple genes to the same pixel, combining their expression values. We therefore examine both spatial organization and gene-level fidelity.

\subsection{Spectral Descriptors (Quantifying Smoothness)}
\label{sec32}

We characterize spatial structure in the frequency domain~\citep{cao2022random}. Averaging the centered two-dimensional power spectrum within integer-radius rings gives a radial profile $\hat{r}\in\mathbb{R}^{L}$, where $L=\min(\lfloor W/2\rfloor,\lfloor H/2\rfloor)$. We normalize this profile so that $\sum_k\hat{r}_k=1$.

\textbf{Spectral Entropy (SE)} measures the spread of power across radial frequencies:
\begin{equation}
    \mathrm{SE}=-\sum_k\hat{r}_k\log(\hat{r}_k+\epsilon),
\end{equation}
where $\epsilon$ ensures numerical stability. Higher SE indicates a broader frequency distribution.

\textbf{Low-Frequency Energy Ratio (LFER)} measures low-frequency concentration:
\begin{equation}
    \mathrm{LFER}=\sum_{k=0}^{\lfloor L/4\rfloor}\hat{r}_k.
\end{equation}
Higher LFER indicates greater relative low-frequency power and smoother spatial patterns.

\subsection{Information Fidelity (Quantifying Resolution)}
\label{sec33}

We describe gene-level allocation using the occupied pixels $P_{used}\subseteq\Omega$ and the number of genes $g(p)$ assigned to each pixel.

\textbf{Average Pixels Per Gene (APPG)} measures occupied pixels relative to gene count:
\begin{equation}
    \mathrm{APPG}=\frac{|P_{used}|}{G}.
\end{equation}
For a single-view coordinate assignment, $\mathrm{APPG}=1$ means that each gene occupies a unique pixel.

\textbf{Collision Rate (CR)} measures the proportion of occupied pixels shared by multiple genes:
\begin{equation}
    \mathrm{CR}=\frac{|\{p\in P_{used}:g(p)>1\}|}{|P_{used}|}.
\end{equation}
A value of zero indicates that no genes share a pixel. For Genomap's soft transport coupling, uniform target marginals occupy all $G$ transport targets ($\mathrm{APPG}=1$), while entropic regularization gives each target contributions from multiple genes ($\mathrm{CR}=1$). Here, CR therefore denotes soft mixing rather than hard coordinate collisions. APPG and CR describe potential information loss through allocation, rather than directly measuring task-relevant information.

\subsection{The Convolutional Probe}

We use ResNet-18 as a common learner for cell-type annotation. Early loss reduction characterizes optimization, while accuracy and F1-Macro measure final prediction. We relate the four image descriptors jointly to these outcomes using beta regression~\citep{ferrari2004beta}. Each observation is a dataset--mapping--seed run within one input scope; Full and HVG inputs and the two prediction metrics are modeled separately:
\begin{align}
Y_{dms}&\sim\operatorname{Beta}\!\left(\mu_{dms}\phi,(1-\mu_{dms})\phi\right),\\
\operatorname{logit}(\mu_{dms})&=\alpha_{ds}+\beta_1\mathrm{APPG}_{dms}+\beta_2\mathrm{CR}_{dms}
+\beta_3\mathrm{LFER}_{dms}+\beta_4\mathrm{SE}_{dms},
\end{align}
where $d$, $m$, and $s$ index dataset, mapping, and seed. The descriptors enter on their original scales, $\alpha_{ds}$ denotes dataset-by-seed fixed effects, and $\phi$ is constant within each model. All outcomes lie strictly in $(0,1)$. Standard errors are clustered by dataset--method, and we report two-sided $p$-values together with Benjamini--Hochberg-adjusted $q$-values. Appendix~\ref{sec:appendix_stats} provides the inference, coverage, imputation, and diagnostic details. Pixel-assignment interventions then test whether changing allocation alters prediction.

\section{Experimental Protocols}

\subsection{Datasets and Benchmarks}

We evaluate cell-type classification on five datasets: Diabetic Kidney Disease (DKD)~\citep{wilson2022multimodal}, GTEx v9~\citep{eraslan2022single}, HypoMap~\citep{steuernagel2022hypomap}, Immune Cell Atlas~\citep{dominguez2022cross}, and Tabula Sapiens~\citep{the2022tabula}. These span disease-specific cohorts and cross-tissue atlases with varying class imbalance (Table~\ref{tab:datasets}). Our main analyses use full-transcriptome inputs, defined as all genes retained after preprocessing; complementary HVG evaluations are reported in the appendix. We also evaluate gene-to-protein prediction on NeurIPS 2021 CITE-Seq~\citep{stoeckius2017simultaneous}. Dataset specifications and evaluation protocols are provided in Appendix~\ref{appendix_data}.

\subsection{Gene Expression-to-Image Transformation Strategies}

The DeepInsight family (t-SNE, UMAP, PCA)~\citep{sharma2019deepinsight} and MR-DeepInsight~\citep{sharma2024enhanced} project genes onto fixed grids and average expression values at shared pixels. Fotomics~\citep{zandavi2023fotomics} uses FFT-based assignment with the same aggregation rule. Adaptive methods use grids with side length $\lceil\sqrt{G}\rceil$: Genomap~\citep{islam2023cartography} constructs optimal transport-weighted pixel sums, whereas IGTD~\citep{zhu2021converting} and REFINED~\citep{bazgir2020representation} optimize one-to-one assignments. Random Mapping assigns genes to unique coordinates without using gene similarity and keeps these coordinates fixed across cells.

Spectral descriptors are computed on native images; CNN inputs are resized to $224\times224$. Appendix~\ref{appendix_GE2I_protocols} details preprocessing, mapping, and normalization.

\subsection{Baseline Comparisons}

External methods provide performance context rather than form part of the controlled GE2I comparison. We include XGBoost on PCA features~\citep{chen2016xgboost}, Seurat~\citep{hao2021integrated}, scANVI with and without scArches~\citep{lotfollahi2020query}, and the foundation models scGPT~\citep{cui2024scgpt} and CellFM~\citep{zeng2025cellfm}. These models retain their respective input and training protocols; the foundation models are evaluated through zero-shot embeddings. A tuned MLP on raw expression vectors provides an additional non-image reference in the appendix. Implementation details appear in Appendix~\ref{appendixa2}.

\subsection{Training Protocol}

All main GE2I comparisons use the same ResNet-18 backbone~\citep{he2016deep} and training hyperparameters, without mapping-specific network tuning. We train with AdamW~\citep{loshchilov2017decoupled}, linear warmup, and cosine annealing, and report means and standard deviations across three random seeds. Hyperparameters are provided in Appendix~\ref{appendixa3}. Mapping construction is timed separately with a six-hour fitting budget. Additional experiments assess assignment effects across visual backbones and examine ImageNet pretraining and image augmentation.

\section{Results}
\label{sec:results}

We first evaluate prediction and mapping cost, then examine image properties and pixel assignment, and finally study visual backbones, pretraining, augmentation, and spatial interpretation.

\subsection{How Well Do Gene Expression Images Support Cell-Type Prediction?}
\label{sec:validity}
\label{sec:scalability}

\textbf{Gene expression images support competitive cell-type prediction.} Under the shared ResNet-18 setting, several GE2I representations perform comparably to established latent-variable methods, including MR-DeepInsight on the Immune Cell Atlas (Table~\ref{tab:hvg_results}). Single-view mappings also provide useful predictions, although scGPT remains stronger on several HVG datasets. External methods retain their recommended protocols, so this comparison evaluates the viability of GE2I rather than ranking fully optimized pipelines.

\begin{table}[!htbp]
\centering
\caption{\textbf{Cell-type prediction in the HVG setting.} GE2I methods use a shared ResNet-18 and training configuration. External methods follow their recommended implementations and settings. Entries report means and standard deviations over three seeds.}
\label{tab:hvg_results}
\resizebox{\linewidth}{!}{
\begin{tabular}{lcccccccccc}
\toprule
 & \multicolumn{2}{c}{DKD} & \multicolumn{2}{c}{GTEx v9} & \multicolumn{2}{c}{HypoMap} & \multicolumn{2}{c}{Immune Cell Atlas} & \multicolumn{2}{c}{Tabula Sapiens} \\
Method & ACC & F1-M & ACC & F1-M & ACC & F1-M & ACC & F1-M & ACC & F1-M \\
\midrule
XGBoost & $0.8646_{\pm0.01}$ & $0.6860_{\pm0.01}$ & $0.4509_{\pm0.04}$ & $0.1249_{\pm0.04}$ & $0.9661_{\pm0.00}$ & $0.3933_{\pm0.19}$ & $0.6040_{\pm0.03}$ & $0.4350_{\pm0.02}$ & $0.2387_{\pm0.08}$ & $0.0925_{\pm0.03}$ \\
scANVI & $0.9107_{\pm0.01}$ & $0.8082_{\pm0.03}$ & $0.5608_{\pm0.05}$ & $0.1694_{\pm0.06}$ & $0.9863_{\pm0.00}$ & $0.5061_{\pm0.23}$ & $0.7529_{\pm0.02}$ & $0.6207_{\pm0.02}$ & $0.3856_{\pm0.13}$ & $0.1484_{\pm0.06}$ \\
scANVI (scArches) & $0.9242_{\pm0.00}$ & $\mathbf{0.8424}_{\pm0.02}$ & $0.5691_{\pm0.05}$ & $0.1539_{\pm0.03}$ & $0.9866_{\pm0.00}$ & $0.5105_{\pm0.23}$ & $0.7606_{\pm0.02}$ & $0.6468_{\pm0.01}$ & $0.3236_{\pm0.02}$ & $0.1594_{\pm0.07}$ \\
Seurat & $0.8648_{\pm0.00}$ & $0.5908_{\pm0.03}$ & $0.5308_{\pm0.07}$ & $0.1305_{\pm0.05}$ & $0.9843_{\pm0.00}$ & $0.4782_{\pm0.23}$ & $0.5458_{\pm0.02}$ & $0.4208_{\pm0.02}$ & $0.3154_{\pm0.10}$ & $0.1287_{\pm0.05}$ \\
scGPT & $\mathbf{0.9421}_{\pm0.01}$ & $0.8286_{\pm0.02}$ & $\mathbf{0.7588}_{\pm0.01}$ & $\mathbf{0.2896}_{\pm0.07}$ & - & - & $\mathbf{0.7713}_{\pm0.02}$ & $\mathbf{0.7050}_{\pm0.01}$ & $\mathbf{0.4192}_{\pm0.16}$ & $0.2123_{\pm0.03}$ \\
CellFM & $0.6594_{\pm0.09}$ & $0.3744_{\pm0.03}$ & $0.4016_{\pm0.12}$ & $0.1143_{\pm0.05}$ & $0.6810_{\pm0.16}$ & $0.0852_{\pm0.00}$ & $0.4088_{\pm0.01}$ & $0.2101_{\pm0.01}$ & $0.0027_{\pm0.00}$ & $0.0011_{\pm0.00}$ \\
\midrule
Random (HVG) & $0.8996_{\pm0.00}$ & $0.7734_{\pm0.01}$ & $0.6530_{\pm0.05}$ & $0.2106_{\pm0.01}$ & $0.9795_{\pm0.01}$ & $0.5571_{\pm0.28}$ & $0.6828_{\pm0.03}$ & $0.4842_{\pm0.02}$ & $0.3837_{\pm0.08}$ & $0.2147_{\pm0.07}$ \\
DeepInsight (t-SNE) (HVG) & $0.8924_{\pm0.00}$ & $0.7424_{\pm0.01}$ & $0.6262_{\pm0.01}$ & $0.1943_{\pm0.01}$ & $0.9833_{\pm0.00}$ & $0.5720_{\pm0.22}$ & $0.6928_{\pm0.03}$ & $0.5040_{\pm0.01}$ & $0.3711_{\pm0.08}$ & $0.1844_{\pm0.04}$ \\
DeepInsight (UMAP) (HVG) & $0.9000_{\pm0.00}$ & $0.7587_{\pm0.02}$ & $0.6765_{\pm0.04}$ & $0.2209_{\pm0.01}$ & $0.9811_{\pm0.01}$ & $0.5783_{\pm0.25}$ & $0.7045_{\pm0.02}$ & $0.5100_{\pm0.00}$ & $0.3919_{\pm0.08}$ & $0.1967_{\pm0.06}$ \\
DeepInsight (PCA) (HVG) & $0.9029_{\pm0.00}$ & $0.7808_{\pm0.01}$ & $0.6230_{\pm0.02}$ & $0.1759_{\pm0.00}$ & $0.9833_{\pm0.00}$ & $0.5723_{\pm0.18}$ & $0.6858_{\pm0.03}$ & $0.4810_{\pm0.01}$ & $0.3742_{\pm0.06}$ & $0.2016_{\pm0.07}$ \\
GenoMAP (HVG) & $0.8808_{\pm0.03}$ & $0.7717_{\pm0.02}$ & $0.6439_{\pm0.04}$ & $0.2073_{\pm0.02}$ & $0.9766_{\pm0.01}$ & $0.5422_{\pm0.14}$ & $0.6683_{\pm0.02}$ & $0.4738_{\pm0.00}$ & $0.3611_{\pm0.09}$ & $0.2097_{\pm0.07}$ \\
IGTD (HVG) & $0.9022_{\pm0.00}$ & $0.7328_{\pm0.04}$ & $0.6660_{\pm0.04}$ & $0.2112_{\pm0.03}$ & $0.9836_{\pm0.00}$ & $0.5390_{\pm0.21}$ & $0.6900_{\pm0.02}$ & $0.4998_{\pm0.01}$ & $0.3928_{\pm0.08}$ & $0.2067_{\pm0.06}$ \\
Fotomics (HVG) & $0.9015_{\pm0.00}$ & $0.7424_{\pm0.02}$ & $0.6577_{\pm0.02}$ & $0.2007_{\pm0.00}$ & $0.9814_{\pm0.01}$ & $0.5200_{\pm0.23}$ & $0.7070_{\pm0.03}$ & $0.5104_{\pm0.00}$ & $0.3830_{\pm0.08}$ & $0.2086_{\pm0.07}$ \\
Refined (HVG) & $0.9023_{\pm0.01}$ & $0.7527_{\pm0.01}$ & $0.6369_{\pm0.07}$ & $0.1936_{\pm0.02}$ & $0.9810_{\pm0.00}$ & $0.5595_{\pm0.18}$ & $0.6627_{\pm0.00}$ & $0.4840_{\pm0.01}$ & $0.3725_{\pm0.05}$ & $\mathbf{0.2154}_{\pm0.07}$ \\
MR-DeepInsight (HVG) & $0.9085_{\pm0.01}$ & $0.7967_{\pm0.02}$ & $0.6842_{\pm0.04}$ & $0.1989_{\pm0.05}$ & $\mathbf{0.9922}_{\pm0.01}$ & $\mathbf{0.5932}_{\pm0.24}$ & $0.7482_{\pm0.02}$ & $0.6393_{\pm0.02}$ & $0.3872_{\pm0.08}$ & $0.2097_{\pm0.10}$ \\
\bottomrule
\end{tabular}
}
\end{table}

\textbf{Full-transcriptome inputs make smoothness and gene-collision differences more pronounced than HVG inputs.} For the main mechanistic analysis, we use all genes retained after preprocessing, avoiding dependence on a particular HVG selection rule. Each mapping must organize more expression values while the downstream CNN and input resolution remain fixed, making some construction characteristics more pronounced. Figure~\ref{fig:spectral_analysis} shows how spectral properties change with gene scope; complete measurements appear in Appendix Table~\ref{tab:full_metrics}.

\begin{figure}[!htbp]
  \centering
    \includegraphics[width=\linewidth]{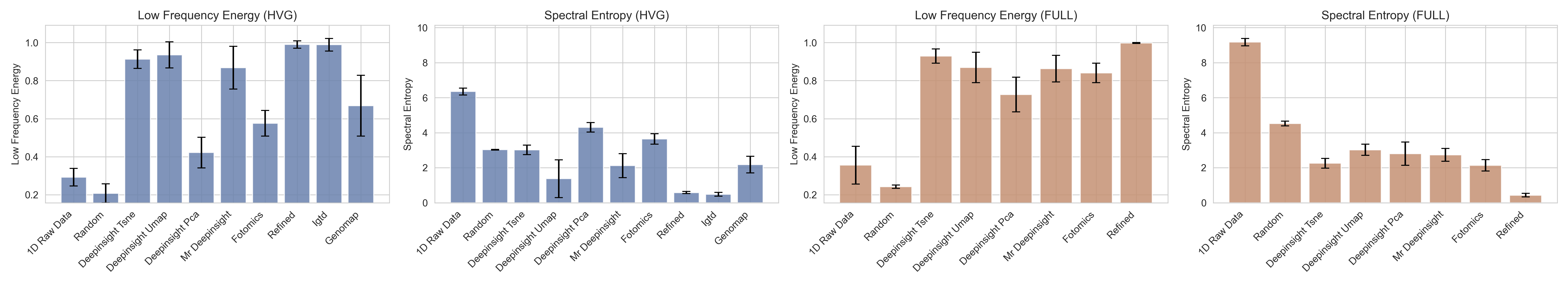}
    \caption{\textbf{Spectral properties of GE2I under HVG and full-transcriptome inputs.} LFER and SE characterize the native images produced by each mapping. Several structured mappings concentrate more power at low spatial frequencies than Random Mapping.}
    \label{fig:spectral_analysis}
\end{figure}

\begin{table}[!htbp]
\centering
\caption{\textbf{Full-transcriptome prediction.} Absolute accuracy and F1-Macro are reported under the shared GE2I learning configuration. scGPT is an external reference; the raw-expression MLP is reported in Appendix Table~\ref{tab:simple-mlp-full}. T/O denotes a fit exceeding the six-hour construction budget; dashes denote unreported results.}
\label{tab:full-scope-performance}
\resizebox{\linewidth}{!}{
\begin{tabular}{lcccccccccc}
\toprule
 & \multicolumn{2}{c}{DKD} & \multicolumn{2}{c}{GTEx v9} & \multicolumn{2}{c}{HypoMap} & \multicolumn{2}{c}{Immune Cell Atlas} & \multicolumn{2}{c}{Tabula Sapiens} \\
 Method & Acc & F1-M & Acc & F1-M & Acc & F1-M & Acc & F1-M & Acc & F1-M \\
\midrule
scGPT & $0.9462_{\pm0.00}$ & $0.8553_{\pm0.02}$ & $0.8091_{\pm0.02}$ & $0.3415_{\pm0.09}$ & - & - & $0.8061_{\pm0.03}$ & $0.7462_{\pm0.00}$ & $0.3930_{\pm0.15}$ & $0.1977_{\pm0.06}$ \\
\midrule
Random & $0.9421_{\pm0.00}$ & $0.8053_{\pm0.01}$ & $0.8489_{\pm0.03}$ & $0.4601_{\pm0.03}$ & $0.9901_{\pm0.00}$ & $0.6553_{\pm0.23}$ & $0.8785_{\pm0.00}$ & $0.7305_{\pm0.01}$ & $0.4569_{\pm0.10}$ & $\mathbf{0.3453}_{\pm0.11}$ \\
DeepInsight (t-SNE) & $0.9537_{\pm0.00}$ & $0.8868_{\pm0.02}$ & $0.8548_{\pm0.04}$ & $\mathbf{0.5064}_{\pm0.04}$ & $0.9918_{\pm0.00}$ & $0.7487_{\pm0.17}$ & $0.8766_{\pm0.01}$ & $0.7480_{\pm0.01}$ & $0.4270_{\pm0.13}$ & $0.2467_{\pm0.02}$ \\
DeepInsight (UMAP) & $0.9572_{\pm0.00}$ & $0.8924_{\pm0.00}$ & $0.8462_{\pm0.04}$ & $0.4885_{\pm0.03}$ & $0.9935_{\pm0.00}$ & $0.7713_{\pm0.15}$ & $0.8820_{\pm0.00}$ & $0.7625_{\pm0.02}$ & $0.4275_{\pm0.21}$ & $0.2672_{\pm0.06}$ \\
DeepInsight (PCA) & $0.9575_{\pm0.00}$ & $0.8458_{\pm0.00}$ & $0.8416_{\pm0.03}$ & $0.4415_{\pm0.01}$ & $0.9926_{\pm0.00}$ & $0.7276_{\pm0.19}$ & $0.8716_{\pm0.01}$ & $0.7483_{\pm0.03}$ & $0.4423_{\pm0.09}$ & $0.3054_{\pm0.11}$ \\
MR-DeepInsight & $\mathbf{0.9700}_{\pm0.00}$ & $\mathbf{0.9271}_{\pm0.01}$ & $\mathbf{0.8715}_{\pm0.04}$ & $0.3656_{\pm0.05}$ & $\mathbf{0.9994}_{\pm0.00}$ & $\mathbf{0.8021}_{\pm0.13}$ & $\mathbf{0.8990}_{\pm0.01}$ & $\mathbf{0.8224}_{\pm0.00}$ & $0.4097_{\pm0.16}$ & $0.2525_{\pm0.06}$ \\
Fotomics & $0.9543_{\pm0.00}$ & $0.8518_{\pm0.01}$ & $0.8327_{\pm0.03}$ & $0.4184_{\pm0.03}$ & $0.9913_{\pm0.00}$ & $0.7191_{\pm0.19}$ & $0.8613_{\pm0.01}$ & $0.7195_{\pm0.01}$ & $\mathbf{0.4601}_{\pm0.13}$ & $0.2947_{\pm0.05}$ \\
Refined & $0.9566_{\pm0.00}$ & $0.8704_{\pm0.02}$ & $0.8467_{\pm0.05}$ & $0.4875_{\pm0.04}$ & $0.9920_{\pm0.00}$ & $0.7274_{\pm0.18}$ & $0.8827_{\pm0.00}$ & $0.7302_{\pm0.01}$ & T/O & T/O \\
\bottomrule
\end{tabular}
}
\end{table}

Table~\ref{tab:full-scope-performance} reports full-transcriptome prediction, and Figure~\ref{method_fit_time_by_scope} reports construction time. DeepInsight and Fotomics fit within minutes to hours; IGTD timed out on all datasets, Genomap on all but DKD, and REFINED on Tabula Sapiens. Although the shared CNN processes images at the same resolution, mapping construction remains a separate computational constraint. We also evaluate GE2I on CITE-Seq gene-to-protein prediction (Appendix~\ref{sec:appendix_cross_modality}).

No layout performs best across datasets. DeepInsight (t-SNE) outperforms Random Mapping in F1-Macro on GTEx v9, whereas Random Mapping achieves the highest mean F1-Macro among the reported GE2I methods on Tabula Sapiens. A tuned expression-space MLP is competitive on DKD and the Immune Cell Atlas, while structured GE2I methods achieve higher F1-Macro on GTEx v9 and HypoMap (Appendix Table~\ref{tab:simple-mlp-full}). These differences motivate examining image properties and their relation to learning, rather than inferring the value of spatial organization from final scores alone.

\begin{figure}[!htbp]
\centering
\begin{minipage}[t]{0.48\linewidth}
  \vspace{0pt}
  \centering
  \includegraphics[width=\linewidth]{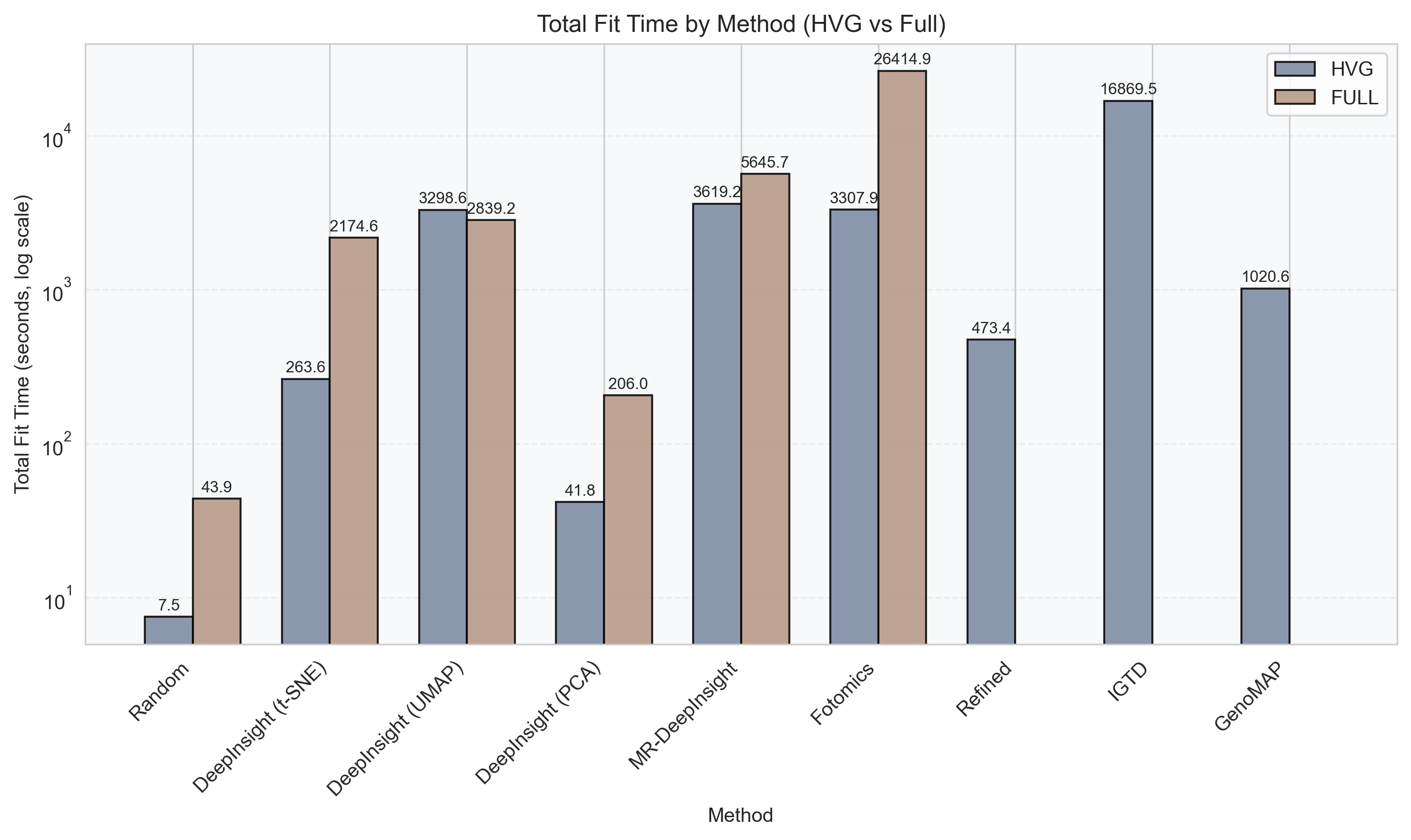}
  \caption{\textbf{Mapping construction cost.} Runtime to fit $\mathcal{M}$, aggregated across datasets for the plotted methods. Iterative strategies such as REFINED and Genomap are excluded from this summary because they exceeded the fitting budget on large benchmarks.}
  \label{method_fit_time_by_scope}
\end{minipage}\hfill
\begin{minipage}[t]{0.48\linewidth}
  \vspace{0pt}
  \centering
  \includegraphics[width=\linewidth]{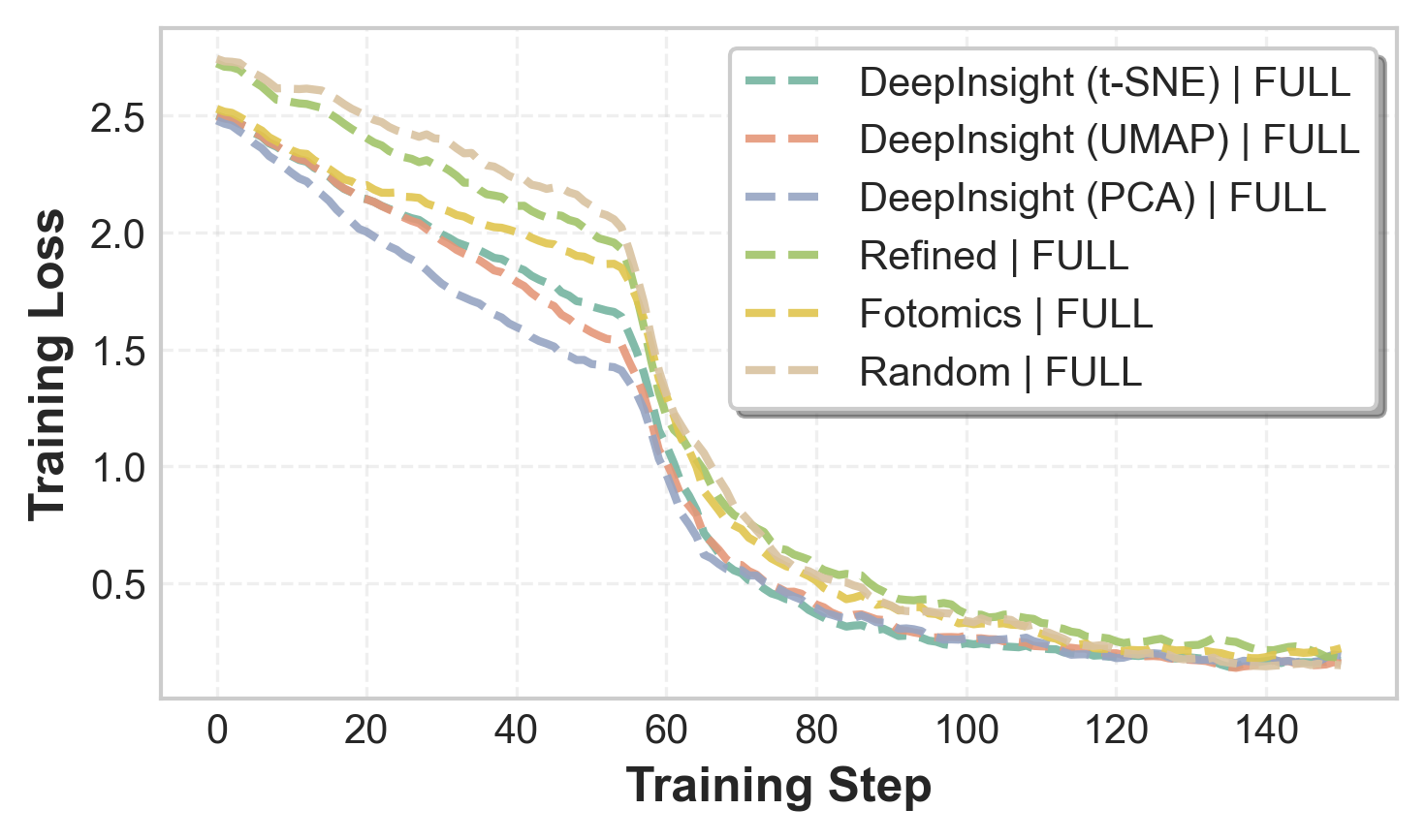}
  \captionof{figure}{\textbf{Early optimization on full-transcriptome DKD.} Training-loss curves over the first 60 steps under the shared CNN. Structured mappings show faster early loss reduction than Random Mapping.}
  \label{fig:loss_curves}
\end{minipage}
\end{figure}

\subsection{How Do Image Properties Relate to Optimization and Prediction?}
\label{sec:mechanism}

\textbf{Smoother images optimize faster, but preserving separate gene values is also important for final prediction.} We characterize these two effects using LFER and SE for spatial structure and APPG and CR for gene-level allocation. Appendix Figures~\ref{fig:hvg_cell_vision} and~\ref{fig:full_cell_vision} show the resulting image patterns, and Appendix Table~\ref{tab:full_metrics} provides the complete measurements.

\textbf{Structured mappings produce smoother, lower-frequency images than random placement.} DeepInsight (t-SNE/UMAP) and REFINED have full-transcriptome LFER values of approximately $0.8$--$1.0$, whereas Random Mapping remains near $0.24$ (Figure~\ref{fig:spectral_analysis}). Placing similarly expressed genes nearby produces coherent regions with gradually varying intensity. Random placement yields more irregular neighboring values and a broader frequency distribution. The CNN's local filters receive smoother patterns under structured layouts and more abrupt spatial changes under random placement.

\textbf{Low-frequency structure accelerates early optimization but does not ensure better final prediction.} On DKD, high-LFER mappings such as DeepInsight reduce loss faster over the first 60 training steps than Random Mapping (Figure~\ref{fig:loss_curves}). This is consistent with the preference for low-frequency patterns described in work on neural-network spectral bias~\citep{rahaman2019spectral}. Spatial organization can therefore aid optimization by presenting expression values in a form the CNN learns more readily. However, early loss reduction does not establish final predictive performance, motivating analysis of how individual gene values are represented.

\textbf{Joint regression links gene mixing to worse prediction and organized spatial structure to better prediction.} We fit beta regressions relating APPG, CR, LFER, and SE jointly to accuracy and F1-Macro in the Full setting (Table~\ref{tab:beta_regression}). Dataset-by-seed fixed effects account for baseline performance differences, while clustered standard errors account for repeated seeds within each dataset--method pair. Both outcomes show the same pattern: APPG and CR have negative coefficients, whereas LFER and SE have positive coefficients, with all eight associations remaining significant after multiple-testing adjustment. These results connect the two sides of the trade-off: gene mixing is associated with lower prediction quality, while organized spatial structure is associated with higher prediction quality, consistent with the representation-level mechanism derived in Appendix~\ref{app:theory}.

\begin{table}[!htbp]
\centering
\caption{\textbf{Beta regression on full-transcriptome performance.} Descriptors enter jointly with dataset-by-seed fixed effects, and SE is clustered by dataset--method. The $p$-values use a $t$ distribution with 34 degrees of freedom; $q$ denotes the Benjamini--Hochberg-adjusted $p$-value across the 16 descriptor tests. Coefficients are on the logit-mean scale.}
\label{tab:beta_regression}
\small
\begin{tabular}{llrrrrr}
\toprule
Target & Descriptor & Coef. & Cluster SE & $z^*$ & $p$ & $q$ \\
\midrule
Accuracy & APPG & -1.044 & 0.358 & -2.92 & 0.006 & 0.011 \\
         & CR   & -3.251 & 0.617 & -5.27 & $<0.001$ & $<0.001$ \\
         & LFER &  4.954 & 1.367 &  3.62 & $<0.001$ & 0.003 \\
         & SE   &  0.933 & 0.265 &  3.52 & 0.001 & 0.003 \\
\midrule
F1-Macro & APPG & -0.966 & 0.414 & -2.33 & 0.026 & 0.041 \\
         & CR   & -3.349 & 0.791 & -4.23 & $<0.001$ & 0.001 \\
         & LFER &  5.334 & 1.485 &  3.59 & 0.001 & 0.003 \\
         & SE   &  0.995 & 0.282 &  3.53 & 0.001 & 0.003 \\
\bottomrule
\end{tabular}
\end{table}

\textbf{More gene mixing is associated with worse prediction.} For F1-Macro, CR has a coefficient of $-3.349$ ($q=0.001$). When several genes share one pixel, their distinct expression values are replaced by a shared average, reducing gene-level separation.

\textbf{Organized low-frequency structure is associated with better prediction.} LFER has a coefficient of $+5.334$ ($q=0.003$), consistent with the smoother layouts and faster early optimization observed above. SE is also positive ($+0.995$, $q=0.003$). Together, these results favor images that are spatially smooth but still contain variation across scales, rather than images that are irregular or nearly uniform.

\textbf{Using more pixels alone does not guarantee better prediction.} APPG has a coefficient of $-0.966$ ($q=0.041$) after holding CR and spectral descriptors fixed. Performance depends on which genes share pixels and how the values are arranged. Accuracy shows the same directions. Coefficient magnitudes are on the logit-mean scale; Appendix~\ref{sec:appendix_stats} reports the HVG results and diagnostics.

\textbf{Random layouts can compete by preserving gene identity in separate pixels.} Random Mapping forgoes similarity-based neighborhoods but preserves gene identity through fixed, separate pixels across cells. This helps explain its competitive performance against smoother mappings that merge expression values. Broader gene coverage accentuates this distinction: for DeepInsight (t-SNE) on HypoMap, CR rises from $0.198$ with HVGs to $0.641$ with full-transcriptome inputs. IGTD and REFINED show that spatial organization and collision-free allocation can coexist. If aggregation limits a structured mapping's benefit, separating genes should improve prediction even when the image becomes less smooth.

\subsection{Can Changing Pixel Assignment Improve Prediction?}
\label{sec:allocation}

\textbf{Yes. Assigning each gene to a unique pixel improves prediction even when image smoothness decreases.} Across the tested DeepInsight mappings and datasets, removing collisions increases F1-Macro while LFER decreases. In contrast, simply enlarging the image leaves substantial gene overlap and provides little improvement.

\begin{table}[t]
\centering
\caption{\textbf{Image-size and pixel-assignment experiments on DKD.} (a) Prediction and allocation descriptors across the reported image sizes. (b) LSA results, with superscripts indicating relative changes from the original assignment.}
\label{tab:resolution_allocation}
\begin{subtable}[t]{0.48\linewidth}
\centering
\caption{Image-size ablation}
\resizebox{\linewidth}{!}{
\begin{tabular}{lcccc}
\toprule
Size & Acc & F1-M & APPG & CR \\
\midrule
$64^2$ & $0.9608$ & $0.8990$ & $0.480$ & $0.493$ \\
$128^2$ & $0.9570$ & $0.8913$ & $0.489$ & $0.481$ \\
$224^2$ & $0.9556$ & $0.8915$ & $0.486$ & $0.485$ \\
$384^2$ & $0.9597$ & $0.9013$ & $0.478$ & $0.495$ \\
\bottomrule
\end{tabular}}
\label{tab:canvas_size}
\end{subtable}\hfill%
\begin{subtable}[t]{0.48\linewidth}
\centering
\caption{Efficacy of linear sum assignment (LSA)}
\resizebox{\linewidth}{!}{
\begin{tabular}{lcccc}
\toprule
Method & LFER & CR & Acc & F1-M \\
\midrule
DeepInsight (t-SNE) & $0.90^{\textcolor{decrease}{-4\%}}$ & $\mathbf{0.00}^{\textcolor{decrease}{-100\%}}$ & $0.97^{\textcolor{imporvement}{+1\%}}$ & $0.91^{\textcolor{imporvement}{+3\%}}$ \\
DeepInsight (UMAP) & $0.84^{\textcolor{decrease}{-7\%}}$ & $\mathbf{0.00}^{\textcolor{decrease}{-100\%}}$ & $0.97^{\textcolor{imporvement}{+1\%}}$ & $0.91^{\textcolor{imporvement}{+2\%}}$ \\
\bottomrule
\end{tabular}}
\label{tab:lsa_efficacy}
\end{subtable}
\end{table}

\textbf{Larger canvases neither remove gene collisions nor meaningfully improve prediction.} In the DKD ablation (Table~\ref{tab:canvas_size}), increasing image dimensions from $64^2$ to $384^2$ changes F1-Macro from $0.8990$ to $0.9013$, with lower scores at intermediate sizes. CR remains near $0.49$ throughout. Larger images thus provide little improvement while substantial gene overlap persists, motivating a change to pixel allocation with the input genes and downstream learner fixed.

\textbf{One-to-one assignment improves prediction despite reducing image smoothness.} Linear sum assignment (LSA)~\citep{burkard1980linear} gives each gene a distinct pixel. On DKD, it eliminates native-image collisions and improves F1-Macro for DeepInsight (t-SNE) and DeepInsight (UMAP) by approximately $3\%$ and $2\%$, respectively, while LFER decreases by $4$--$7\%$ (Table~\ref{tab:lsa_efficacy}). Consistent with the prediction from Section~\ref{sec:mechanism}, separating gene values improves prediction despite lower image smoothness: the benefit of revised allocation can outweigh reduced low-frequency concentration.

\textbf{Greedy unique assignment removes collisions and improves all six mapping--dataset comparisons.} We introduce a greedy assignment procedure that avoids a dense gene--pixel cost matrix. Genes are processed in a fixed order, each placed at the nearest available pixel to its rounded target coordinate, with an occupancy mask preventing reuse. This preserves projected coordinates as placement targets while enforcing distinct locations (Appendix~\ref{app:unique_assignment}).

Table~\ref{tab:lsa-vs-bin-resnet18} reports reassignment results for DeepInsight (t-SNE) and DeepInsight (UMAP) on DKD, GTEx v9, and HypoMap. All six comparisons improve mean F1-Macro, with $\mathrm{CR}=0$ and $\mathrm{APPG}=1$ on reassigned native images. For example, DeepInsight (t-SNE) on GTEx v9 improves from approximately $0.506$ to $0.556$, while LFER falls from $0.867$ to $0.756$. LFER decreases in every comparison, so low-frequency concentration does not explain the gains.

These interventions support reducing unnecessary gene aggregation while using the projected layout to guide placement. Reassignment changes both gene separation and local spatial patterns, and this combination improves prediction despite lower smoothness across the tested mappings. We next examine whether the benefit extends beyond ResNet-18 and how other visual training choices interact with GE2I.

\begin{table}[!htbp]
\centering
\caption{\textbf{Greedy unique assignment across datasets.} Results use ResNet-18. Superscripts report relative changes from conventional binning for the corresponding dataset and mapping. The algorithm is described in Appendix~\ref{app:unique_assignment}.}
\label{tab:lsa-vs-bin-resnet18}
\resizebox{\columnwidth}{!}{
\begin{tabular}{llcccccc}
\toprule
Dataset & Mapping & APPG & CR & LFER & SE & Acc & F1-M \\
\midrule
DKD & DeepInsight (t-SNE) & $1.0000_{\pm 0.00}^{\textcolor{imporvement}{+104.8\%}}$ & $0.0000_{\pm 0.00}^{\textcolor{decrease}{-100.0\%}}$ & $0.8021_{\pm 0.00}^{\textcolor{decrease}{-15.0\%}}$ & $2.3435_{\pm 0.00}^{\textcolor{imporvement}{+19.7\%}}$ & $0.9679_{\pm 0.00}^{\textcolor{imporvement}{+1.5\%}}$ & $0.9047_{\pm 0.00}^{\textcolor{imporvement}{+2.0\%}}$ \\
DKD & DeepInsight (UMAP) & $1.0000_{\pm 0.00}^{\textcolor{imporvement}{+84.8\%}}$ & $0.0000_{\pm 0.00}^{\textcolor{decrease}{-100.0\%}}$ & $0.7811_{\pm 0.00}^{\textcolor{decrease}{-13.5\%}}$ & $2.4673_{\pm 0.00}^{\textcolor{decrease}{-10.0\%}}$ & $0.9667_{\pm 0.00}^{\textcolor{imporvement}{+1.0\%}}$ & $0.9006_{\pm 0.02}^{\textcolor{imporvement}{+0.9\%}}$ \\
GTEx v9 & DeepInsight (t-SNE) & $1.0000_{\pm 0.00}^{\textcolor{imporvement}{+142.0\%}}$ & $0.0000_{\pm 0.00}^{\textcolor{decrease}{-100.0\%}}$ & $0.7560_{\pm 0.04}^{\textcolor{decrease}{-12.8\%}}$ & $2.7173_{\pm 0.30}^{\textcolor{imporvement}{+2.8\%}}$ & $0.8630_{\pm 0.03}^{\textcolor{imporvement}{+1.0\%}}$ & $0.5557_{\pm 0.02}^{\textcolor{imporvement}{+9.7\%}}$ \\
GTEx v9 & DeepInsight (UMAP) & $1.0000_{\pm 0.00}^{\textcolor{imporvement}{+271.3\%}}$ & $0.0000_{\pm 0.00}^{\textcolor{decrease}{-100.0\%}}$ & $0.7374_{\pm 0.05}^{\textcolor{decrease}{-8.0\%}}$ & $2.8200_{\pm 0.32}^{\textcolor{decrease}{-12.0\%}}$ & $0.8495_{\pm 0.04}^{\textcolor{imporvement}{+0.4\%}}$ & $0.5357_{\pm 0.05}^{\textcolor{imporvement}{+9.7\%}}$ \\
HypoMap & DeepInsight (t-SNE) & $1.0000_{\pm 0.00}^{\textcolor{imporvement}{+229.2\%}}$ & $0.0000_{\pm 0.00}^{\textcolor{decrease}{-100.0\%}}$ & $0.8602_{\pm 0.00}^{\textcolor{decrease}{-9.4\%}}$ & $2.0356_{\pm 0.07}^{\textcolor{decrease}{-7.7\%}}$ & $0.9940_{\pm 0.00}^{\textcolor{imporvement}{+0.2\%}}$ & $0.7945_{\pm 0.13}^{\textcolor{imporvement}{+6.1\%}}$ \\
HypoMap & DeepInsight (UMAP) & $1.0000_{\pm 0.00}^{\textcolor{imporvement}{+359.5\%}}$ & $0.0000_{\pm 0.00}^{\textcolor{decrease}{-100.0\%}}$ & $0.8085_{\pm 0.05}^{\textcolor{decrease}{-11.7\%}}$ & $2.2516_{\pm 0.34}^{\textcolor{decrease}{-21.8\%}}$ & $0.9938_{\pm 0.00}^{\textcolor{imporvement}{+0.0\%}}$ & $0.8137_{\pm 0.13}^{\textcolor{imporvement}{+5.5\%}}$ \\
\bottomrule
\end{tabular}
}
\end{table}

\subsection{How Do Visual Backbones and Training Choices Affect GE2I?}
\label{sec:cv_priors}

\textbf{Unique pixel assignment helps across backbones, whereas pretraining and augmentation depend on the architecture and mapping.} On DKD, reassignment improves mean F1-Macro for ResNet-18, ResNet-50, ConvNeXt-Tiny, EfficientNet-B0, and ViT-B/16 by $3.4\%$--$7.0\%$ (Appendix Table~\ref{tab:dkd-backbone-lsa-shared}). The consistent gains across convolutional and transformer models show that the allocation benefit is not specific to ResNet-18. Appendix Table~\ref{tab:random-backbones-dkd} provides the corresponding Random Mapping results.

\textbf{ImageNet pretraining accelerates early ResNet-18 learning for structured images, but not for random layouts.} ImageNet initialization~\citep{deng2009imagenet} has different effects across mappings and architectures. In the ResNet-18 training curves on DKD, pretrained weights accelerate early learning for structured DeepInsight images, whereas Random Mapping shows no clear acceleration (Figure~\ref{pretrained_loss_curves}). This contrast is consistent with pretrained visual features being better suited to the smoother spatial patterns of structured layouts.

Final prediction depends on the backbone (Table~\ref{tab:backbone_pretrained}). ImageNet pretraining improves mean F1-Macro for ViT-B/16 and ResNet-50 under both DeepInsight (t-SNE) and REFINED, whereas ResNet-18 performs better with random initialization. Thus, some learners benefit from pretraining even when the smaller CNN learns effectively from expression images alone. Together with the training curves, these results indicate architecture- and mapping-dependent benefits; early acceleration alone does not establish a final predictive advantage.

\begin{figure}[!htbp]
\centering
\begin{minipage}[t]{0.47\textwidth}
  \vspace{0pt}
  \centering
  \includegraphics[width=\linewidth]{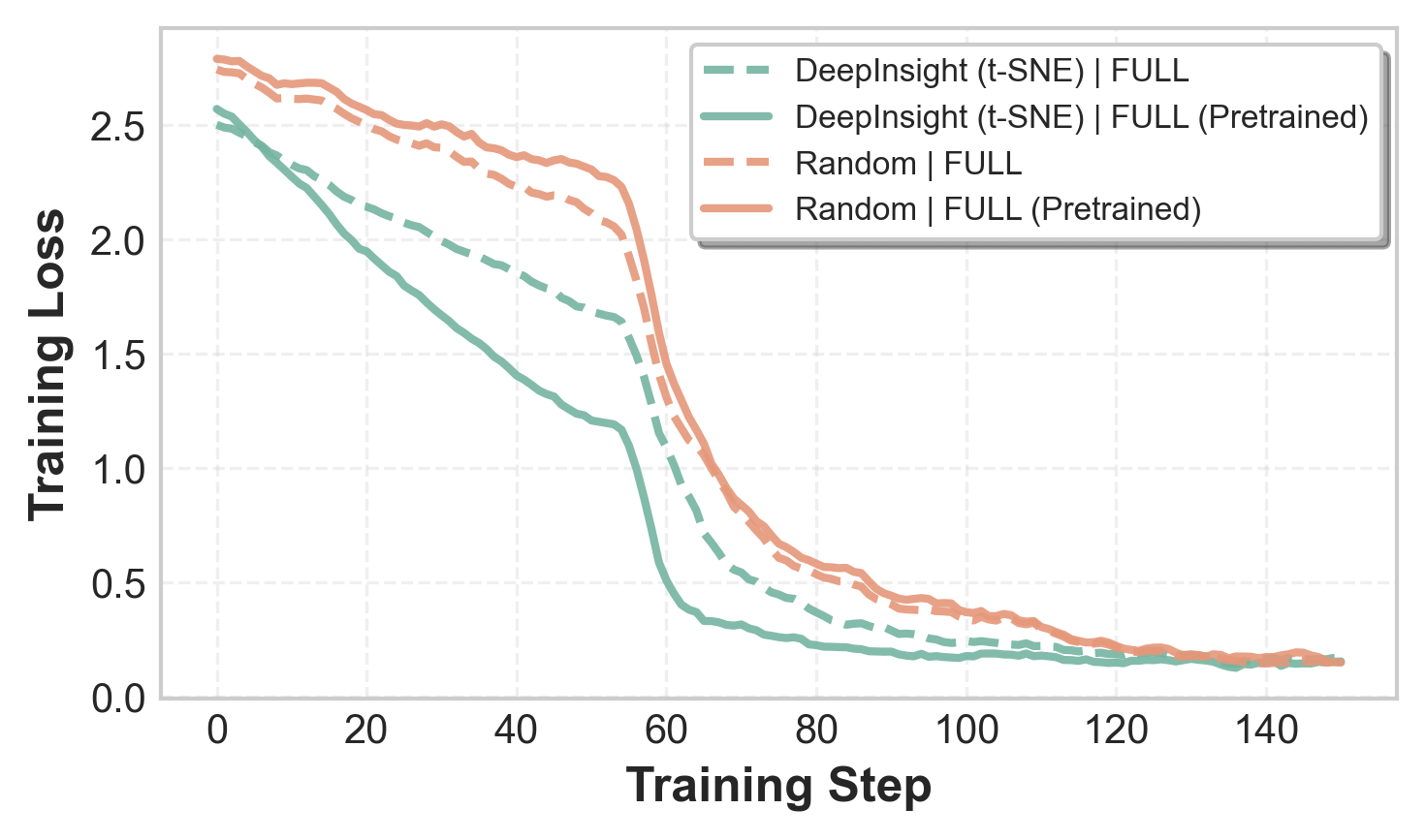}
  \captionof{figure}{\textbf{ImageNet initialization and early learning on DKD.} ResNet-18 training curves with and without pretrained weights. Structured DeepInsight images benefit in early optimization, whereas Random Mapping shows no clear acceleration.}
  \label{pretrained_loss_curves}
\end{minipage}\hfill
\begin{minipage}[t]{0.49\textwidth}
  \vspace{0pt}
  \centering
  \captionof{table}{\textbf{Backbones and pretraining on DKD.} Accuracy and F1-Macro with ImageNet initialization and random initialization for two structured mappings.}
  \label{tab:backbone_pretrained}
  \resizebox{\linewidth}{!}{
  \begin{tabular}{llc c c}
  \toprule
  Method & Backbone & Pretrained & ACC & F1-Macro\\
  \midrule
  \multirow{6}{*}{DeepInsight (t-SNE)} & \multirow{2}{*}{ResNet-18} & Yes & $0.9479_{\pm 0.01}$ & $0.8767_{\pm 0.01}$\\
   &  & No & $0.9537_{\pm 0.00}$ & $0.8868_{\pm 0.02}$\\
   & \multirow{2}{*}{ResNet-50} & Yes & $0.9512_{\pm 0.00}$ & $0.8915_{\pm 0.02}$\\
   &  & No & $0.9457_{\pm 0.01}$ & $0.8676_{\pm 0.01}$\\
   & \multirow{2}{*}{ViT-B/16} & Yes & $0.9510_{\pm 0.00}$ & $0.8659_{\pm 0.03}$\\
   &  & No & $0.9401_{\pm 0.01}$ & $0.8252_{\pm 0.01}$\\
  \midrule
  \multirow{6}{*}{Refined} & \multirow{2}{*}{ResNet-18} & Yes & $0.9504_{\pm 0.01}$ & $0.8392_{\pm 0.02}$\\
   &  & No & $0.9566_{\pm 0.00}$ & $0.8704_{\pm 0.03}$\\
   & \multirow{2}{*}{ResNet-50} & Yes & $0.9452_{\pm 0.01}$ & $0.8266_{\pm 0.02}$\\
   &  & No & $0.9468_{\pm 0.00}$ & $0.8159_{\pm 0.02}$\\
   & \multirow{2}{*}{ViT-B/16} & Yes & $0.9529_{\pm 0.00}$ & $0.8460_{\pm 0.01}$\\
   &  & No & $0.9505_{\pm 0.01}$ & $0.8294_{\pm 0.04}$\\
  \bottomrule
  \end{tabular}}
\end{minipage}
\end{figure}

\textbf{Augmentation: geometry versus intensity.} On DKD, geometric transformations lower mean F1-Macro for DeepInsight (t-SNE) and REFINED but improve Random Mapping (Appendix Table~\ref{tab:augmentation_ablation_dkd}). Because these transformations change the locations associated with gene identities, their suitability depends on the mapping and cannot be inferred solely from their usefulness for natural images.

Intensity perturbations vary brightness and contrast without changing gene coordinates and improve mean F1-Macro for all three mappings. Stronger perturbation is not uniformly better: combining all augmentations lowers F1-Macro for DeepInsight (t-SNE) and REFINED to $0.3227$ and $0.1941$, respectively, whereas Random Mapping improves under every tested setting. Intensity perturbation is thus the most consistent choice in this comparison; geometric and combined augmentations require mapping-specific evaluation.

\textbf{Spatial interpretation beyond prediction.} Competitive prediction from random layouts does not make spatial organization irrelevant to interpretation. On DKD, DeepInsight (t-SNE) has a Grad-CAM marker hit ratio of $0.50$, versus $0.25$ for Random Mapping (Appendix Table~\ref{tab:dkd_gradcam_interpretability}). Its salient regions contain five of five listed markers for distal convoluted tubule cells and podocytes, versus two of five for Random Mapping (Appendix Figure~\ref{fig:dkd_gradcam_examples}). These more marker-aligned regions support gene-level inspection in this analysis. 

\section{Conclusion and Limitations}

Under controlled training conditions, GE2I exhibits a trade-off between spatial smoothness and gene-level fidelity. Structured mappings produce smooth patterns that can ease early CNN optimization, but assigning multiple genes to the same pixel may obscure expression differences and reduce final predictive performance. Our pixel-assignment experiments show that preserving gene-level information improves prediction even when the images become less smooth. Transcriptomic image design should therefore balance spatial organization with information preservation rather than treat smoothness or early optimization as a proxy for prediction quality.

\textbf{Limitations and Future Directions.}
Our experiments focus mainly on cell-type prediction with a shared CNN and training protocol. This setting isolates representation-level effects but does not measure the best performance attainable through method-specific tuning. Our pixel-level descriptors also do not capture higher-order relationships among genes. Further work should learn adaptive allocations that balance spatial organization, gene-level fidelity, and interpretability, and evaluate them on larger multimodal datasets and with foundation models.

\section*{AI Use Statement}

Generative AI tools were used to support research ideation, literature discovery, manuscript drafting, and language polishing. AI-assisted suggestions were reviewed against the experimental results and relevant literature. All experiments, numerical results, and statistical analyses were conducted and verified by the authors. The authors take full responsibility for the final content of this work.

\section*{Ethics Statement}

This study uses previously released single-cell transcriptomic datasets and does not involve the collection of new human-subject data. The analyses are intended for computational genomics research rather than clinical diagnosis or treatment.

\section*{Reproducibility Statement}

Experimental settings, preprocessing procedures, mapping parameters, training configurations, statistical analyses, and pixel-assignment methods are described in the main text and appendix. Results are reported across three random seeds where applicable. Source code will be released upon publication.

\FloatBarrier
\bibliography{paper}
\bibliographystyle{iclr2027_conference}

\newpage
\appendix
\onecolumn

\section{A Theoretical View of the Smoothness--Fidelity Trade-off}
\label{app:theory}

The empirical analysis suggests that spatial smoothness can facilitate learning,
while gene mixing can reduce predictive information. We illustrate how these
two effects can arise from the same pixel-aggregation operation.

Consider a binary classification problem with $Y\in\{-1,+1\}$. For the
$k_p$ genes assigned to pixel $p$, let
\begin{equation}
    \mathbf{X}_p \mid Y=y
    =
    y\boldsymbol{\mu}_p+\boldsymbol{\varepsilon}_p,
    \qquad
    \boldsymbol{\varepsilon}_p
    \sim\mathcal{N}(0,\sigma^2 I_{k_p}),
    \label{eq:block_model}
\end{equation}
where noise is independent across genes and pixels. Mean aggregation produces
\begin{equation}
    Z_p
    =
    \frac{1}{k_p}\mathbf{1}^{\top}\mathbf{X}_p.
\end{equation}
Let $\bar{\mu}_p=k_p^{-1}\mathbf{1}^{\top}\boldsymbol{\mu}_p$.

\begin{proposition}[Smoothness--fidelity effect of pixel aggregation]
\label{prop:smoothness_fidelity}
Mean aggregation has two simultaneous effects.

First, for neighboring pixels $p$ and $q$,
\begin{equation}
    \mathbb{E}
    \left[
        (Z_p-Z_q)^2 \mid Y
    \right]
    =
    (\bar{\mu}_p-\bar{\mu}_q)^2
    +
    \sigma^2
    \left(
        \frac{1}{k_p}+\frac{1}{k_q}
    \right).
    \label{eq:local_smoothness}
\end{equation}
Thus, holding the underlying spatial signal fixed, averaging more genes
suppresses noise-induced local variation.

Second, let $D_X^2$ and $D_Z^2$ denote the squared Mahalanobis distances
between the two classes before and after aggregation. Then
\begin{align}
    D_X^2
    &=
    \frac{4}{\sigma^2}
    \sum_p
    \|\boldsymbol{\mu}_p\|_2^2,\\
    D_Z^2
    &=
    \frac{4}{\sigma^2}
    \sum_p
    k_p\bar{\mu}_p^2,
\end{align}
and therefore
\begin{equation}
    D_X^2-D_Z^2
    =
    \frac{4}{\sigma^2}
    \sum_p
    \left\|
        \boldsymbol{\mu}_p
        -
        \bar{\mu}_p\mathbf{1}
    \right\|_2^2
    \geq 0.
    \label{eq:discriminative_loss}
\end{equation}
Aggregation therefore removes exactly the class-discriminative components
that distinguish genes within the same pixel.
\end{proposition}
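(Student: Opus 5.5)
The proof is a direct second-moment computation under the block model \eqref{eq:block_model}, split into the smoothness identity and the Mahalanobis identity.

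\textbf{Smoothness identity.} Conditional on $Y=y$, we have $Z_p = y\bar{\mu}_p + \bar{\varepsilon}_p$ with $\bar{\varepsilon}_p = k_p^{-1}\mathbf{1}^\top\boldsymbol{\varepsilon}_p$. This variable is Gaussian with mean zero and variance $\sigma^2 k_p^{-1}$, since $\mathbf{1}^\top\mathbf{1}=k_p$. Hence
\[
Z_p - Z_q = y(\bar{\mu}_p-\bar{\mu}_q) + (\bar{\varepsilon}_p-\bar{\varepsilon}_q).
\]
I would expand the square and take the conditional expectation. The cross term vanishes because the noise has zero mean. The noise term equals $\operatorname{Var}(\bar\varepsilon_p)+\operatorname{Var}(\bar\varepsilon_q)$, because noise is independent across distinct pixels, so $\operatorname{Cov}(\bar\varepsilon_p,\bar\varepsilon_q)=0$. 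Using $y^2=1$ then gives \eqref{eq:local_smoothness}. The interpretation follows because $\sigma^2/k$ is decreasing in $k$.

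\textbf{Mahalanobis distances.} The class-conditional means of $\mathbf{X}=(\mathbf{X}_p)_p$ are $\pm\boldsymbol{\mu}$, and the shared covariance is $\sigma^2 I$. Therefore
\[
D_X^2=(2\boldsymbol{\mu})^\top(\sigma^2 I)^{-1}(2\boldsymbol{\mu})=\frac{4}{\sigma^2}\sum_p\|\boldsymbol{\mu}_p\|_2^2 .
\]
For $\mathbf{Z}=(Z_p)_p$, the class means are $\pm\bar{\mu}_p$. The shared covariance is diagonal with entries $\sigma^2/k_p$, again by independence across pixels. This gives
\[
D_Z^2=\sum_p \frac{(2\bar\mu_p)^2 k_p}{\sigma^2}=\frac{4}{\sigma^2}\sum_p k_p\bar{\mu}_p^2 .
\]

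\textbf{The difference.} I would use the orthogonal (Pythagorean) decomposition of $\boldsymbol{\mu}_p$ into its projection onto $\mathbf{1}$ and the orthogonal residual:
\[
\|\boldsymbol{\mu}_p\|^2=\|\bar\mu_p\mathbf{1}\|^2+\|\boldsymbol{\mu}_p-\bar\mu_p\mathbf{1}\|^2 .
\]
Here $\|\bar\mu_p\mathbf{1}\|^2=k_p\bar\mu_p^2$. The orthogonality holds because
\[
\mathbf{1}^\top(\boldsymbol{\mu}_p-\bar\mu_p\mathbf{1})=k_p\bar\mu_p-k_p\bar\mu_p=0 .
\]
Summing over $p$ yields \eqref{eq:discriminative_loss}, and nonnegativity is immediate. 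The residual $\boldsymbol{\mu}_p-\bar\mu_p\mathbf{1}$ is exactly the within-pixel contrast that is annihilated by $\mathbf{1}^\top$, which justifies the final sentence of the proposition.

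\textbf{Main obstacle.} There is no deep obstacle; the only point needing care is bookkeeping. Two things must be stated explicitly. First, the Mahalanobis distance for $\mathbf{Z}$ must use the aggregated covariance $\operatorname{diag}(\sigma^2/k_p)$ rather than $\sigma^2 I$. Second, independence across pixels is what makes both the $\mathbf{Z}$ covariance diagonal and the cross-covariance in the first identity vanish.
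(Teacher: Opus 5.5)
Your proposal is correct and follows the paper's proof step for step: it uses the same per-pixel representation $Z_p = y\bar{\mu}_p + \bar{\varepsilon}_p$ with variance $\sigma^2/k_p$, the same Mahalanobis computations with covariances $\sigma^2 I$ and $\operatorname{diag}(\sigma^2/k_p)$, and the same orthogonal decomposition $\boldsymbol{\mu}_p = \bar{\mu}_p\mathbf{1} + (\boldsymbol{\mu}_p - \bar{\mu}_p\mathbf{1})$. Your explicit remarks on the vanishing cross term and on the role of cross-pixel independence spell out what the paper compresses into the single phrase ``independence across pixels immediately gives'' the first identity.
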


\begin{proof}
For each pixel,
\[
Z_p
=
y\bar{\mu}_p
+
\bar{\varepsilon}_p,
\qquad
\operatorname{Var}(\bar{\varepsilon}_p)
=
\frac{\sigma^2}{k_p}.
\]
Independence across pixels immediately gives
Eq.~\eqref{eq:local_smoothness}.

Before aggregation, the two class means differ by
$2\boldsymbol{\mu}_p$ within pixel $p$, and the covariance is
$\sigma^2 I$, giving
\[
D_X^2
=
\frac{4}{\sigma^2}
\sum_p
\|\boldsymbol{\mu}_p\|_2^2.
\]
After aggregation, the class means at pixel $p$ differ by
$2\bar{\mu}_p$, while its variance is $\sigma^2/k_p$. Hence
\[
D_Z^2
=
\frac{4}{\sigma^2}
\sum_p
k_p\bar{\mu}_p^2.
\]
Finally, decompose
\[
\boldsymbol{\mu}_p
=
\bar{\mu}_p\mathbf{1}
+
\boldsymbol{\mu}_p^\perp,
\qquad
\mathbf{1}^{\top}\boldsymbol{\mu}_p^\perp=0.
\]
Orthogonality gives
\[
\|\boldsymbol{\mu}_p\|_2^2
=
k_p\bar{\mu}_p^2
+
\|\boldsymbol{\mu}_p^\perp\|_2^2,
\]
which yields Eq.~\eqref{eq:discriminative_loss}.
\end{proof}

The proposition provides a simple explanation for the empirical
smoothness--fidelity trade-off. Averaging reduces gene-specific fluctuations,
thereby lowering local spatial variation. On a regular grid, the sum of
neighboring squared differences is the discrete Dirichlet energy, which
penalizes rapidly varying spatial components and therefore connects this
effect to the low-frequency structure measured in the main analysis.
At the same time, averaging removes within-pixel gene contrasts that can
carry class-discriminative information.

This also clarifies why the two properties need not be fundamentally
incompatible. If genes grouped within a pixel have the same class-dependent
signal, i.e.,
$\boldsymbol{\mu}_p=\bar{\mu}_p\mathbf{1}$, then
Eq.~\eqref{eq:discriminative_loss} is zero even though averaging still reduces
noise-induced spatial variation. More generally, spatial smoothness can also
be obtained by placing similarly behaving genes nearby without merging their
values.

The result corresponds directly to the two dominant effects observed in the
joint regression analysis: greater low-frequency organization is associated
with better prediction, whereas greater gene mixing is associated with worse
prediction. The theory does not prescribe regression coefficients; rather, it
shows how these opposite associations can arise from distinct consequences
of the same image-construction process.

\section{Datasets and Evaluation Benchmarks}
\label{appendix_data}

All benchmarks use data from the OpenProblems in Single-Cell Analysis platform with standardized preprocessing~\citep{luecken2025defining}.
\begin{table}[!htbp]
  \centering 
  \caption{Overview of single-cell datasets. \textit{Imbalance} denotes the ratio between the majority and minority class sizes. The NeurIPS 2021 dataset is used for cross-modality prediction (regression), where the target dimension corresponds to the number of surface proteins.}
  \label{tab:datasets} 
  \resizebox{\columnwidth}{!}{
  \begin{tabular}{lccccc}
  \toprule 
  \textbf{Dataset} & \textbf{Samples} & \textbf{Genes} & \textbf{Classes/Targets} & \textbf{Batches} & \textbf{Imbalance} \\
  \midrule 
  \multicolumn{6}{l}{\textit{Task 1: Cell Type Classification}} \\
  Diabetic Kidney Disease & 39,176 & 27,980 & 13 & 11 & 54.7 \\
  GTEx v9 & 209,126 & 28,094 & 53 & 16 & 956.0 \\
  Immune Cell Atlas & 329,762 & 29,335 & 35 & 12 & 964.4 \\
  HypoMap & 384,925 & 41,642 & 13 & 24 & 5,350.2 \\
  Tabula Sapiens & 483,152 & 55,904 & 160 & 29 & 11,734.7 \\
  \midrule
  \multicolumn{6}{l}{\textit{Task 2: Cross-Modality Prediction (Gene-to-Protein)}} \\
  NeurIPS 2021 CITE-Seq & 8,617 & 36,280 & 13 (Proteins) & 12 & N/A \\
  \bottomrule 
\end{tabular}
  } 
\end{table}
\subsection{Dataset Specifications}
\label{app:datasets}

The Diabetic Kidney Disease (DKD) dataset~\citep{wilson2022multimodal} profiles human kidney cortex from healthy and diabetic donors, including subtypes such as injured proximal tubule cells. GTEx v9~\citep{eraslan2022single} contains 209,126 nuclei from eight tissues and 16 donors, with technical variability and donor-specific batch effects. The Immune Cell Atlas (ICA)~\citep{dominguez2022cross} covers immune cells across 16 tissues, including tissue-resident and circulating populations. HypoMap~\citep{steuernagel2022hypomap} provides a murine hypothalamus reference of approximately 385,000 cells with diverse neuronal subtypes. Tabula Sapiens (TS)~\citep{the2022tabula} contains nearly 500,000 cells from 24 organs. Its class-imbalance ratio exceeds $10,000\times$, making rare-cell performance particularly important. Table~\ref{tab:datasets} reports the ratio of the largest to smallest class for each dataset.

The NeurIPS 2021 CITE-Seq (GEX2ADT) dataset~\citep{stoeckius2017simultaneous} contains bone marrow mononuclear cells from 12 healthy donors. The task is to predict 13 surface-protein abundances from gene expression. Multiple collection sites per donor create a nested batch structure, requiring models to distinguish proteomic signal from technical variation.

Figures~\ref{fig:hvg_cell_vision} and \ref{fig:full_cell_vision} compare DKD images by cell type (rows) and mapping (columns), using the viridis colormap for single-channel images. They show the smooth clusters of DeepInsight and the finer texture of Random Mapping examined in the spectral analysis.

\begin{figure}[p]
  \centering
    \includegraphics[width=0.95\linewidth]{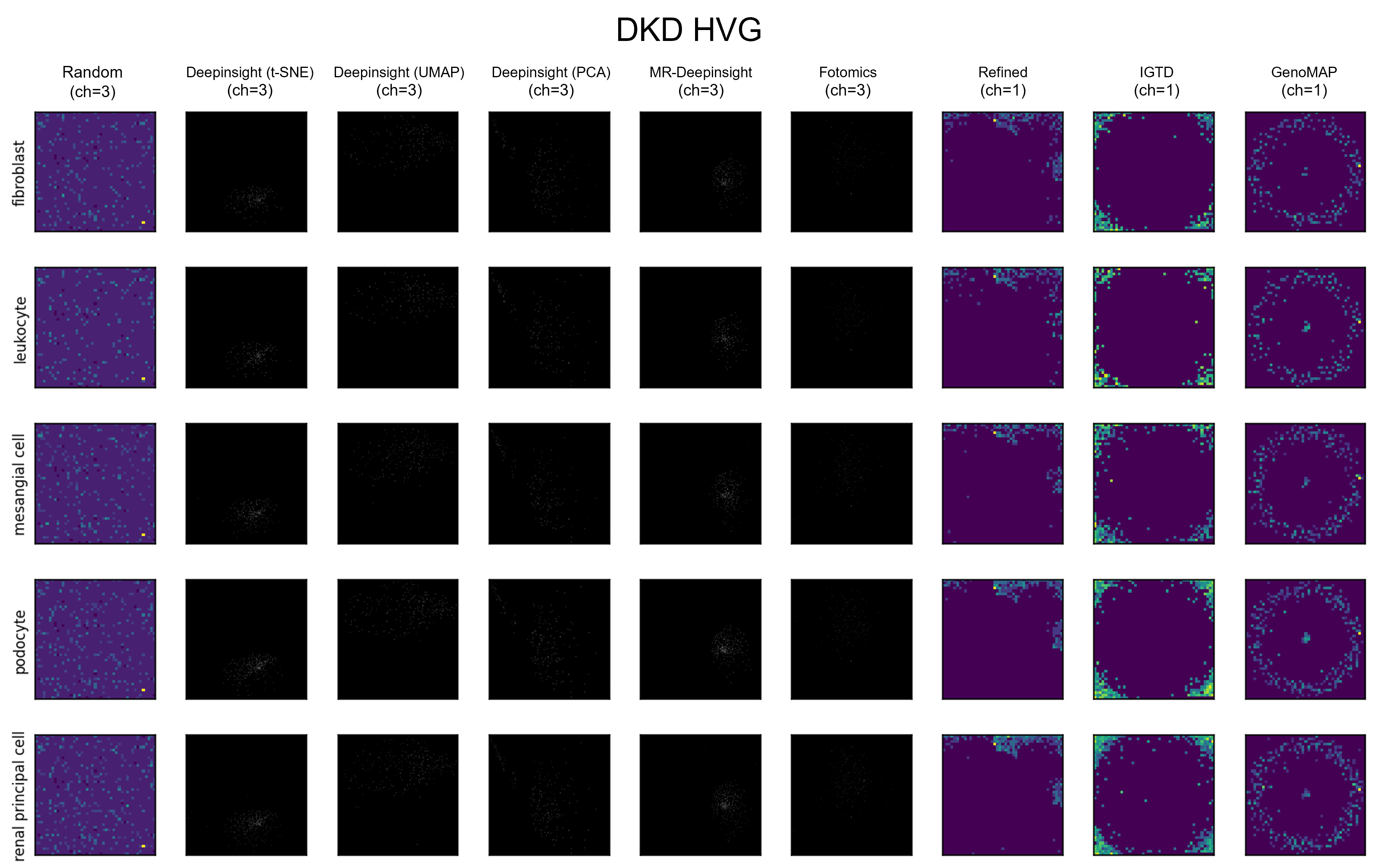}
    \caption{
      Visual comparison of gene expression-to-image mappings on DKD highly variable genes. Each row represents a specific cell type, while each column corresponds to a mapping strategy.
    } 
    \label{fig:hvg_cell_vision}
  \medskip
    \includegraphics[width=0.95\linewidth]{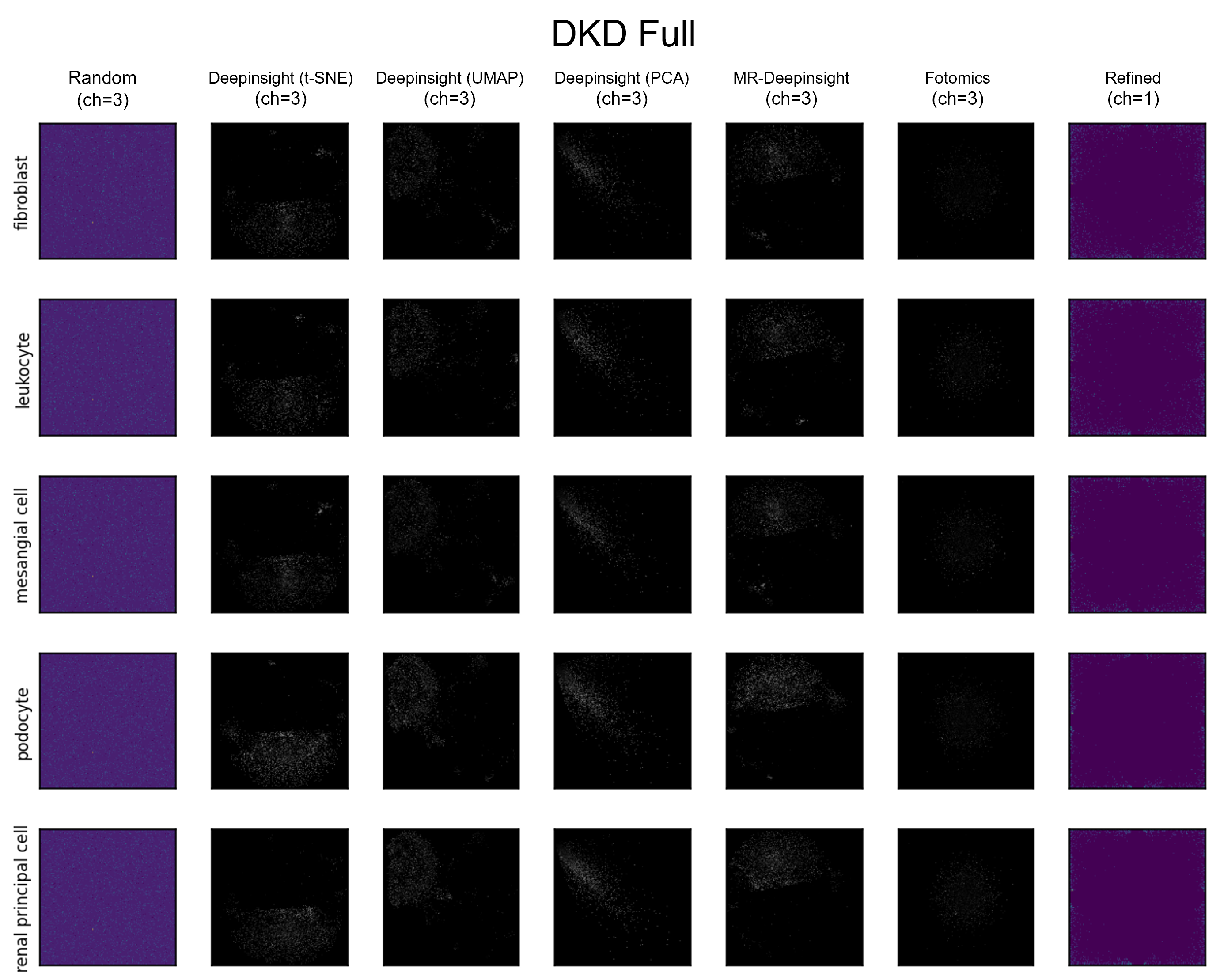}
    \caption{
      Visual comparison of gene expression-to-image mappings on DKD full genomes. Each row represents a specific cell type, while each column corresponds to a mapping strategy.
    } 
    \label{fig:full_cell_vision}
\end{figure}

\subsection{Evaluation Metrics}
\label{app:metrics}

For cell-type annotation, we report accuracy and F1-Macro. We prioritize F1-Macro because it weights each cell type equally, whereas accuracy can be dominated by common classes in these imbalanced atlases.

For gene-to-protein regression, we report root mean squared error (RMSE), global Pearson correlation, and Spearman correlation. RMSE measures prediction error, Pearson correlation captures linear association, and Spearman correlation measures rank association. The rank-based metric complements Pearson correlation when expression scales are nonlinear or affected by extreme values.

\section{Detailed Experimental Settings}

We detail the mapping implementations, baseline configurations, and training protocol below.
\subsection{Gene Expression-to-Image (GE2I) Transformation Protocols}
\label{appendix_GE2I_protocols}

We adapt the official method implementations to a shared pipeline.

\textbf{Preprocessing and Algorithm Hyperparameters.} 

Data preprocessing followed method-specific protocols to maintain consistency with original implementations. In general, gene expression counts were log-normalized ($\log(1+x)$) and globally scaled to the $[0, 1]$ interval. For the \textbf{DeepInsight} family, we followed the original DeepInsight-style preprocessing and used a MinMax scaler, which demonstrated greater stability for manifold projections.

For algorithm configurations, \textbf{DeepInsight} variants used \texttt{perplexity=30} for t-SNE, the default setting for PCA, and \texttt{n\_neighbors=15} with \texttt{min\_dist=0.1} for UMAP, employing cosine metrics and mean-aggregation for pixel collisions. \textbf{MR-DeepInsight} applied identical parameters across parallel views (\texttt{n\_jobs=8}) with majority voting. \textbf{Fotomics} utilized a LogScaler and FFT-based assignment with mean-aggregation.

For adaptive approaches targeting square grids with side length $S = \lceil\sqrt{G}\rceil$, \textbf{Genomap} applied Gromov-Wasserstein Optimal Transport with entropic regularization ($\epsilon=0.1$, \texttt{num\_iter=200}) to generate weighted pixel superpositions. \textbf{IGTD} used the Error Reduction Repositioning (ERR) algorithm with \texttt{max\_step=30,000} and Pearson correlation to enforce a strictly one-to-one mapping. \textbf{REFINED} employed MDS initialization followed by hill-climbing with nearest-neighbor slotting to resolve conflicts. \textbf{Random Mapping} provided a baseline one-to-one assignment on the adaptive grid without semantic priors.

\textbf{Resolution Normalization and Spectral Analysis.} 

We compute SE and LFER on raw generator outputs at their native resolution ($S \times S$ for adaptive grids), before resizing. This avoids suppressing high-frequency structure, particularly in Random Mapping, through interpolation.

For CNN training, we resize maps to $224 \times 224$ using bilinear interpolation with antialiasing, then normalize each channel using training-set means and standard deviations. Spectral descriptors therefore characterize the native maps rather than the resized CNN inputs.

\subsection{Collision-Aware Unique Assignment}
\label{app:unique_assignment}
For mappings that produce continuous 2D coordinates, we replace conventional binning with a greedy assignment to distinct pixels. Let $\mathbf{z}_i\in\mathbb{R}^2$ be the scaled coordinate of gene $i$, and assume that the canvas has sufficient capacity, $|\Omega|=HW\geq G$. Rounding and boundary clipping define its target pixel:
\begin{equation}
\mathbf{t}_i=\Pi_{\Omega}\!\left(\operatorname{round}(\mathbf{z}_i)\right),
\end{equation}
where $\Pi_{\Omega}$ clips each coordinate to the valid grid range. The corresponding global assignment objective is
\begin{equation}
\min_{\mathbf{a}_1,\ldots,\mathbf{a}_G\in\Omega}
\sum_{i=1}^{G}\|\mathbf{a}_i-\mathbf{t}_i\|_2^2
\quad\text{subject to}\quad
\mathbf{a}_i\neq\mathbf{a}_j\quad\text{for all }i\neq j.
\label{eq:unique_assignment_objective}
\end{equation}
Rather than solve this objective jointly, we process genes in a fixed order. Starting with $A_0=\varnothing$, each gene is assigned to the nearest unoccupied pixel:
\begin{equation}
\mathbf{a}_i\in\operatorname*{arg\,min}_{p\in\Omega\setminus A_{i-1}}
\|p-\mathbf{t}_i\|_2^2,
\qquad A_i=A_{i-1}\cup\{\mathbf{a}_i\}.
\end{equation}

We build a \texttt{cKDTree} over the grid coordinates and maintain an occupancy mask. For each target, we query nearest neighbors with geometrically increasing candidate counts, $k=1,2,4,\ldots$, capped at $|\Omega|$. Occupied candidates are discarded; once free candidates are available, we select the closest one. If the adaptive search returns no valid candidate, an exhaustive search over the remaining free pixels provides a fallback. This avoids constructing a dense gene--pixel cost matrix, although searches can become expensive as the grid fills.

Provided $HW\geq G$, the procedure guarantees an injective assignment and hence $CR=0$ and $APPG=1$ on the native single-view grid. It minimizes displacement from the rounded target at each step, conditional on earlier assignments; it does not guarantee a global optimum of Eq.~\ref{eq:unique_assignment_objective} or preservation of the original neighborhoods. The result depends on gene order and tie-breaking. We use this heuristic as a collision-removal intervention distinct from exact LSA, with its cross-dataset results reported in Table~\ref{tab:lsa-vs-bin-resnet18}.

\subsection{Baseline Model Configurations}
\label{appendixa2}
We follow the hyperparameters and input protocols of OpenProblems~\citep{luecken2025defining}. XGBoost~\citep{chen2016xgboost} uses precomputed PCA features, typically 50 components derived from HVGs, rather than raw counts. Seurat uses its default variable-selection, scaling, and classification workflow~\citep{hao2021integrated}.

\textbf{Generative and foundation models.}
For latent modeling, \textbf{scANVI} and its reference-mapping variant \textbf{scANVI+scArches} were configured with two hidden layers ($n_{hidden}=128$), a 30-dimensional latent space, and a dropout rate of 0.2~\citep{lotfollahi2020query}. Training was conducted for approximately 100 to 200 epochs with batch covariates enabled.
We evaluate scGPT~\citep{cui2024scgpt} and CellFM~\citep{zeng2025cellfm} using zero-shot embeddings. The evaluation uses the pretrained \textit{scGPT\_human} and \textit{scGPT\_80M} checkpoints with a weighted KNN classifier ($k=10$) on the embeddings.

\subsection{Standardized Training and Visualization}
\label{appendixa3}
All GE2I representations use a shared ResNet-18 probe initialized with standard weights. We train with AdamW (initial learning rate $1 \times 10^{-3}$, weight decay $0.01$), a 5-epoch linear warmup, and cosine annealing for up to 50 epochs. Early stopping uses a patience of 10 epochs.

\section{Cross-Modality Gene-to-Protein Prediction}
\label{sec:appendix_cross_modality}

We evaluate GE2I on CITE-Seq gene-to-protein prediction~\citep{stoeckius2017simultaneous} using full-transcriptome inputs. Each cell's gene expression profile is represented as an image and used to predict the abundances of 13 surface proteins. Dataset specifications and image-construction procedures are provided in Appendices~\ref{app:datasets} and~\ref{appendix_GE2I_protocols}, respectively. We report root mean squared error (RMSE), overall Pearson correlation (PCC), and overall Spearman correlation to assess prediction error, linear association, and rank agreement with the measured protein abundances.

\begin{table}[!htbp]
\centering
\caption{Gene-to-protein prediction on CITE-Seq using full-transcriptome inputs. Results are reported as mean $\pm$ standard deviation. Lower RMSE and higher correlations indicate better prediction.}
\label{tab:cite_seq_cbmc_cross_modality}
\small
\begin{tabular}{lccc}
\toprule
Method & RMSE $\downarrow$ & Overall PCC $\uparrow$ & Overall Spearman $\uparrow$ \\
\midrule
Random Mapping       & $0.3688\pm0.00$ & $0.7996\pm0.00$ & $0.7443\pm0.01$ \\
DeepInsight (t-SNE)   & $0.3667\pm0.00$ & $0.8009\pm0.00$ & $0.7451\pm0.01$ \\
DeepInsight (UMAP)    & $0.4397\pm0.05$ & $0.7497\pm0.02$ & $0.7266\pm0.02$ \\
DeepInsight (PCA)     & $0.3741\pm0.02$ & $0.7995\pm0.01$ & $0.7647\pm0.01$ \\
MR-DeepInsight       & $0.3479\pm0.00$ & $0.8237\pm0.01$ & $0.7797\pm0.00$ \\
Fotomics             & $0.4125\pm0.01$ & $0.7404\pm0.01$ & $0.7076\pm0.00$ \\
REFINED              & $0.3999\pm0.02$ & $0.7726\pm0.01$ & $0.7214\pm0.01$ \\
\bottomrule
\end{tabular}
\end{table}

Table~\ref{tab:cite_seq_cbmc_cross_modality} reports the results for the evaluated image mappings. Both random and structured layouts produce predictions correlated with measured protein abundances. DeepInsight (t-SNE) and Random Mapping achieve similar overall PCC values of $0.8009$ and $0.7996$, with RMSE values of $0.3667$ and $0.3688$, respectively. This evaluation extends the use of gene expression images from predicting categorical cell identities to estimating continuous protein measurements.

\section{Additional Ablations}

This section collects supplementary result tables for the expression-space
baseline and the visual training choices analyzed in the main text.

\subsection{Full-Genome MLP Ablation}

\begin{table}[!htbp]
\centering
\caption{Tuned full-genome Simple MLP baseline on raw expression vectors. Results use the same splits as the GE2I classification benchmarks.}
\label{tab:simple-mlp-full}
\begin{tabular}{lccccc}
\toprule
Metric & DKD & GTEx v9 & HypoMap & ICA & TS \\
\midrule
Acc & $0.9660_{\pm0.0011}$ & $0.9078_{\pm0.0025}$ & $0.9927_{\pm0.0027}$ & $0.9002_{\pm0.0017}$ & $0.4941_{\pm0.1406}$ \\
F1-M & $0.9312_{\pm0.0161}$ & $0.3624_{\pm0.0239}$ & $0.6903_{\pm0.2036}$ & $0.8285_{\pm0.0165}$ & $0.3044_{\pm0.0524}$ \\
\bottomrule
\end{tabular}
\end{table}

The tuned Simple MLP uses raw expression vectors and the same splits as GE2I. This comparison tests whether an image representation is necessary to benefit from the larger gene set. The MLP is competitive on DKD, ICA, and TS, but its F1-Macro is lower than that of the best structured GE2I methods on GTEx v9 and HypoMap. Raw information can therefore explain some full-genome gains, while projection provides useful inductive bias in other settings. The distinction between metrics is important: on GTEx v9, the MLP attains high accuracy despite weaker F1-Macro, so its aggregate correctness does not imply equally strong performance across cell types. This ablation supports a dataset-dependent role for projection rather than a general superiority of either representation family.

\subsection{Supplementary Visual-Training Results}

The following tables provide the complete backbone and augmentation results
discussed in Section~\ref{sec:cv_priors}.

\begin{table}[!htbp]
\begin{minipage}[t]{0.49\linewidth}
\vspace{0pt}
\centering
\caption{Collision-removal results for DeepInsight (t-SNE) across DKD backbones. Superscripts report relative performance changes.}
\label{tab:dkd-backbone-lsa-shared}
\resizebox{\linewidth}{!}{
\begin{tabular}{lcc}
  \toprule
  Backbone & Acc & F1-M \\
  \midrule
  ResNet-18 &
  $0.9640_{\pm 0.0055}^{\textcolor{imporvement}{+1.2\%}}$ &
  $0.9094_{\pm 0.0410}^{\textcolor{imporvement}{+3.4\%}}$ \\

  ResNet-50 &
  $0.9623_{\pm 0.0065}^{\textcolor{imporvement}{+1.8\%}}$ &
  $0.9071_{\pm 0.0358}^{\textcolor{imporvement}{+4.6\%}}$ \\

  ConvNeXt-Tiny &
  $0.9573_{\pm 0.0044}^{\textcolor{imporvement}{+0.3\%}}$ &
  $0.9022_{\pm 0.0371}^{\textcolor{imporvement}{+4.0\%}}$ \\

  EfficientNet-B0 &
  $0.9654_{\pm 0.0016}^{\textcolor{imporvement}{+1.1\%}}$ &
  $0.9202_{\pm 0.0369}^{\textcolor{imporvement}{+7.0\%}}$ \\

  ViT-B/16 &
  $0.9512_{\pm 0.0084}^{\textcolor{imporvement}{+1.2\%}}$ &
  $0.8618_{\pm 0.0074}^{\textcolor{imporvement}{+4.4\%}}$ \\
  \bottomrule
  \end{tabular}}
\end{minipage}\hfill
\begin{minipage}[t]{0.47\linewidth}
\vspace{0pt}
\centering
\caption{Random mapping performance across backbones on DKD.}
\label{tab:random-backbones-dkd}
\resizebox{\linewidth}{!}{
\begin{tabular}{lcc}
\toprule
Backbone & Acc & F1-M \\
\midrule
ResNet-18 & $0.9609_{\pm 0.00}$ & $0.8635_{\pm 0.02}$ \\
ResNet-50 & $0.9605_{\pm 0.00}$ & $0.8984_{\pm 0.01}$ \\
ConvNeXt-Tiny & $0.9512_{\pm 0.01}$ & $0.8576_{\pm 0.02}$ \\
EfficientNet-B0 & $0.9603_{\pm 0.00}$ & $0.8587_{\pm 0.01}$ \\
ViT-B/16 & $0.9271_{\pm 0.00}$ & $0.8080_{\pm 0.01}$ \\
\bottomrule
\end{tabular}
}
\end{minipage}
\end{table}

\begin{table}[!htbp]
\centering
\caption{\textbf{Image augmentation on DKD.} Each augmentation is evaluated relative to the same mapping without augmentation. All tested configurations, including their combination, are reported.}
\label{tab:augmentation_ablation_dkd}
\resizebox{0.64\linewidth}{!}{
\begin{tabular}{ll c c}
\toprule
Method & Augmentation & ACC & F1-Macro\\
\midrule
\multirow{6}{*}{DeepInsight (t-SNE)} & None & $0.9537_{\pm 0.00}$ & $0.8868_{\pm 0.02}$\\
 & Geometric & $0.9532_{\pm 0.01}$ & $0.8819_{\pm 0.02}$\\
 & Bio. Cutout & $0.9513_{\pm 0.01}$ & $0.8540_{\pm 0.04}$\\
 & Bio. Noise & $0.9482_{\pm 0.01}$ & $0.8771_{\pm 0.01}$\\
 & Bio. Intensity & $0.9566_{\pm 0.00}$ & $0.8938_{\pm 0.01}$\\
 & All & $0.4689_{\pm 0.31}$ & $0.3227_{\pm 0.22}$\\
\midrule
\multirow{6}{*}{Random} & None & $0.9421_{\pm 0.00}$ & $0.8053_{\pm 0.02}$\\
 & Geometric & $0.9550_{\pm 0.01}$ & $0.8416_{\pm 0.04}$\\
 & Bio. Cutout & $0.9532_{\pm 0.01}$ & $0.8319_{\pm 0.03}$\\
 & Bio. Noise & $0.9578_{\pm 0.01}$ & $0.8668_{\pm 0.01}$\\
 & Bio. Intensity & $0.9563_{\pm 0.01}$ & $0.8652_{\pm 0.03}$\\
 & All & $0.9516_{\pm 0.01}$ & $0.8534_{\pm 0.01}$\\
 \midrule
\multirow{6}{*}{Refined} & None & $0.9566_{\pm 0.00}$ & $0.8704_{\pm 0.03}$\\
 & Geometric & $0.9497_{\pm 0.01}$ & $0.8263_{\pm 0.01}$\\
 & Bio. Cutout & $0.9552_{\pm 0.00}$ & $0.8698_{\pm 0.01}$\\
 & Bio. Noise & $0.2926_{\pm 0.22}$ & $0.0932_{\pm 0.12}$\\
 & Bio. Intensity & $0.9584_{\pm 0.00}$ & $0.8837_{\pm 0.02}$\\
 & All & $0.5444_{\pm 0.25}$ & $0.1941_{\pm 0.15}$\\
\bottomrule
\end{tabular}}
\end{table}

\subsection{Interpretability Trade-off}

\begin{table}[!htbp]
\centering
\small
\caption{Grad-CAM interpretability comparison on DKD. Canonical-marker coverage is measured as the number of established markers falling inside the salient region. Higher center mass and marker hit ratio are better; lower entropy and radius are better.}
\label{tab:dkd_gradcam_interpretability}
\resizebox{\columnwidth}{!}{
\begin{tabular}{lccc}
\toprule
Setting & Random & DeepInsight (t-SNE) & Key evidence \\
\midrule
CAM entropy $\downarrow$ & 0.950 & 0.907 & more compact saliency \\
Center mass $\uparrow$ & 0.427 & 0.521 & stronger spatial focus \\
Marker hit ratio $\uparrow$ & 0.25 & 0.50 & better biological alignment \\
\midrule
DCT marker coverage $\uparrow$ & 2/5 & 5/5 & SLC12A3, PVALB, TRPM6, CALB1, FXYD2 \\
TAL marker coverage $\uparrow$ & 2/5 & 4/5 & UMOD, SLC12A1, KCNJ1, CLDN16 \\
Podocyte marker coverage $\uparrow$ & 2/5 & 5/5 & NPHS1, NPHS2, PODXL, WT1, SYNPO \\
\bottomrule
\end{tabular}
}
\end{table}

DeepInsight (t-SNE) yields more concentrated Grad-CAM responses than Random Mapping: CAM entropy is lower ($0.907$ vs.\ $0.950$), center mass higher ($0.521$ vs.\ $0.427$), and mean radius smaller ($0.201$ vs.\ $0.227$). Canonical-marker hit ratio increases from $0.25$ to $0.50$, and mean marker CAM percentile from $0.559$ to $0.742$. The structured layout recovers more canonical markers within hotspots for distal convoluted tubule cells, thick ascending limb cells, and podocytes.

Spatial concentration alone would not establish biological relevance, since the mapping itself affects the geometry of saliency regions. Marker recovery provides complementary evidence that the structured hotspots align with known cell-type features. These results concern the interpretability of the evaluated Grad-CAM maps, not causal validation of the identified genes, and should be read separately from the classification comparison.

\begin{figure}[!htbp]
\centering
\includegraphics[width=\linewidth]{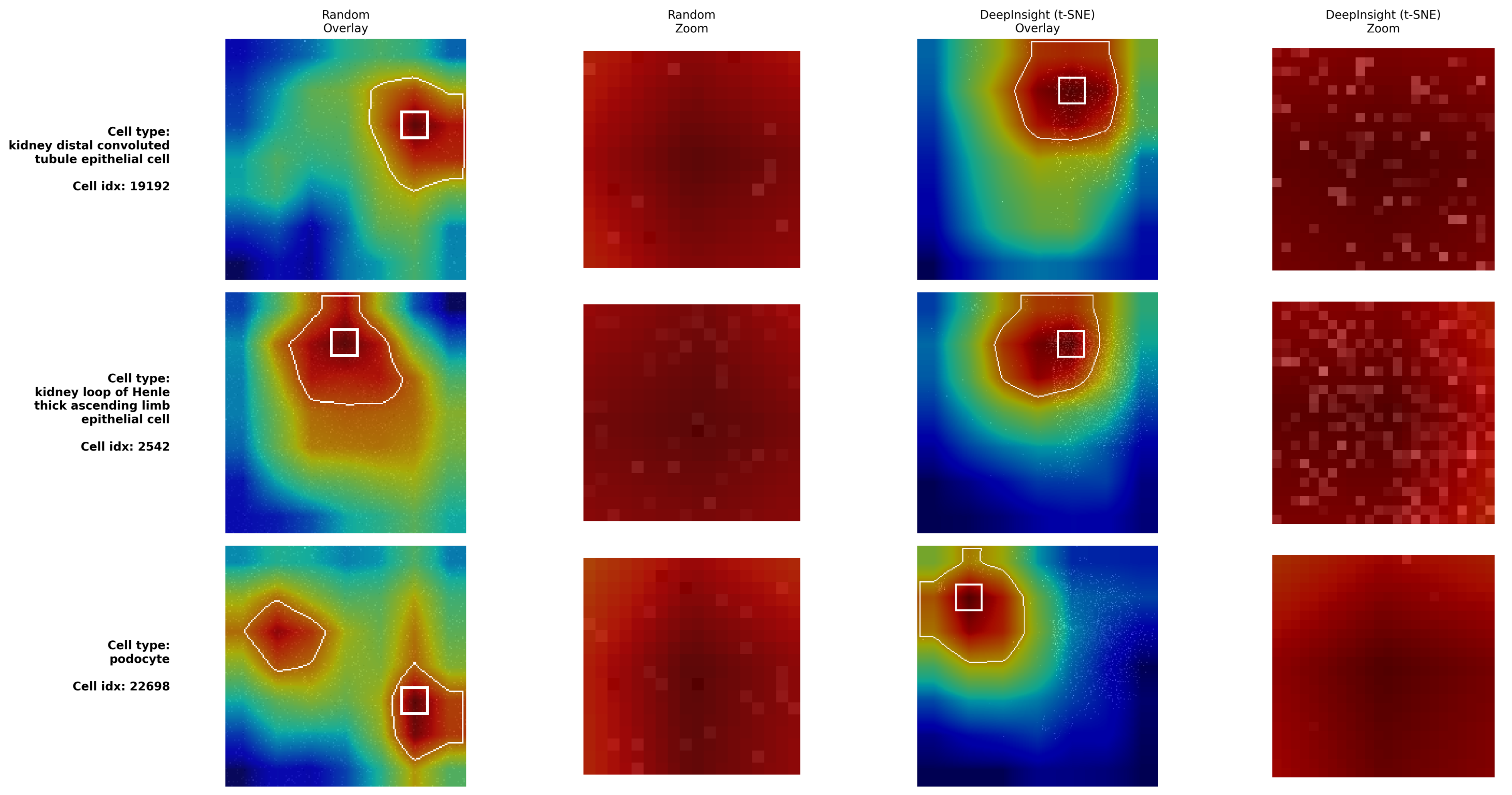}
\caption{Representative Grad-CAM examples for three cell types where \texttt{DeepInsight-tSNE} shows stronger biological alignment than the random mapping. In distal convoluted tubule, thick ascending limb, and podocytes, the structured mapping yields tighter saliency regions and substantially higher canonical-marker recovery within the hotspot.}
\label{fig:dkd_gradcam_examples}
\end{figure}

\FloatBarrier
\section{Comprehensive Metric Breakdown}
Table~\ref{tab:full_metrics} reports spectral descriptors and performance for each method--dataset pair used in Section~\ref{sec:mechanism}.

\textbf{Full-transcriptome manifold layouts create more gene collisions than their HVG counterparts.} CR increases with full-genome inputs; for DeepInsight (t-SNE) on HypoMap, it rises from $0.198$ to $0.641$. Compact gene clusters can thus reduce the number of expression values represented separately.

\textbf{IGTD and REFINED achieve near-maximal smoothness without gene collisions.} Both methods have $CR=0$ and LFER near $1.0$. Their layouts preserve separate gene assignments while concentrating spectral power at low frequencies. This distinguishes collision avoidance from texture preservation and motivates treating both as potential contributors to performance.

\begingroup
\small
\setlength{\tabcolsep}{3pt}
\begin{longtable}{llllllll}
\caption{Detailed metrics for spectral properties, information density, and performance across all datasets.} \label{tab:full_metrics} \\
\toprule
Method & Dataset & APPG & CR & LFER & SE & Acc & F1-M \\
\midrule
\endfirsthead
\multicolumn{8}{c}{\tablename\ \thetable\ -- \textit{Continued from previous page}} \\
\toprule
Method & Dataset & APPG & CR & LFER & SE & Acc & F1-M \\
\midrule
\endhead
\midrule
\multicolumn{8}{r}{\textit{Continued on next page}} \\
\endfoot
\bottomrule
\endlastfoot
\multicolumn{8}{l}{\textbf{Highly Variable Genes (HVG)}} \\
\midrule
Random & DKD & 1.000 & 0.000 & 0.177 & 3.029 & 0.900 & 0.773 \\
Random & GTEx & 1.000 & 0.000 & 0.198 & 3.033 & 0.653 & 0.211 \\
Random & HypoMAP & 1.000 & 0.000 & 0.191 & 3.030 & 0.980 & 0.557 \\
Random & ICA & 1.000 & 0.000 & 0.207 & 3.034 & 0.683 & 0.484 \\
Random & Tabula Sapiens & 1.000 & 0.000 & 0.221 & 3.026 & 0.384 & 0.215 \\
DeepInsight (t-SNE) & DKD & 0.869 & 0.127 & 0.900 & 2.846 & 0.892 & 0.742 \\
DeepInsight (t-SNE) & GTEx & 0.831 & 0.170 & 0.823 & 3.520 & 0.626 & 0.194 \\
DeepInsight (t-SNE) & HypoMAP & 0.832 & 0.198 & 0.933 & 3.088 & 0.983 & 0.572 \\
DeepInsight (t-SNE) & ICA & 0.832 & 0.179 & 0.954 & 2.887 & 0.693 & 0.504 \\
DeepInsight (t-SNE) & Tabula Sapiens & 0.797 & 0.196 & 0.945 & 2.769 & 0.371 & 0.184 \\
DeepInsight (UMAP) & DKD & 0.976 & 0.025 & 0.988 & 0.582 & 0.900 & 0.759 \\
DeepInsight (UMAP) & GTEx & 0.757 & 0.235 & 0.871 & 2.043 & 0.677 & 0.221 \\
DeepInsight (UMAP) & HypoMAP & 0.807 & 0.183 & 0.996 & 0.455 & 0.981 & 0.578 \\
DeepInsight (UMAP) & ICA & 0.778 & 0.205 & 0.953 & 1.254 & 0.705 & 0.510 \\
DeepInsight (UMAP) & Tabula Sapiens & 0.610 & 0.345 & 0.873 & 2.587 & 0.392 & 0.197 \\
DeepInsight (PCA) & DKD & 0.230 & 0.246 & 0.511 & 3.997 & 0.903 & 0.781 \\
DeepInsight (PCA) & GTEx & 0.159 & 0.293 & 0.401 & 4.408 & 0.623 & 0.176 \\
DeepInsight (PCA) & HypoMAP & 0.171 & 0.109 & 0.442 & 4.253 & 0.983 & 0.572 \\
DeepInsight (PCA) & ICA & 0.157 & 0.185 & 0.314 & 4.614 & 0.686 & 0.481 \\
DeepInsight (PCA) & Tabula Sapiens & 0.173 & 0.127 & 0.387 & 4.460 & 0.374 & 0.202 \\
MR-DeepInsight & DKD & 1.784 & 0.103 & 0.895 & 1.768 & 0.908 & 0.797 \\
MR-DeepInsight & GTEx & 1.656 & 0.165 & 0.832 & 2.558 & 0.684 & 0.199 \\
MR-DeepInsight & HypoMAP & 1.641 & 0.165 & 0.919 & 1.686 & 0.992 & 0.593 \\
MR-DeepInsight & ICA & 1.614 & 0.178 & 0.909 & 2.104 & 0.748 & 0.639 \\
MR-DeepInsight & Tabula Sapiens & 1.565 & 0.186 & 0.917 & 1.894 & 0.387 & 0.210 \\
FOTOMICS & DKD & 0.863 & 0.140 & 0.620 & 3.510 & 0.902 & 0.742 \\
FOTOMICS & GTEx & 0.919 & 0.086 & 0.450 & 4.189 & 0.658 & 0.201 \\
FOTOMICS & HypoMAP & 0.885 & 0.116 & 0.569 & 3.738 & 0.981 & 0.520 \\
FOTOMICS & ICA & 0.930 & 0.074 & 0.611 & 3.444 & 0.707 & 0.510 \\
FOTOMICS & Tabula Sapiens & 0.927 & 0.077 & 0.636 & 3.375 & 0.383 & 0.209 \\
Refined & DKD & 1.000 & 0.000 & 0.995 & 0.555 & 0.902 & 0.753 \\
Refined & GTEx & 1.000 & 0.000 & 0.991 & 0.545 & 0.637 & 0.194 \\
Refined & HypoMAP & 1.000 & 0.000 & 0.997 & 0.522 & 0.981 & 0.560 \\
Refined & ICA & 1.000 & 0.000 & 0.995 & 0.608 & 0.663 & 0.484 \\
Refined & Tabula Sapiens & 1.000 & 0.000 & 0.993 & 0.646 & 0.373 & 0.215 \\
IGTD & DKD & 1.000 & 0.000 & 0.996 & 0.467 & 0.902 & 0.733 \\
IGTD & GTEx & 1.000 & 0.000 & 0.994 & 0.481 & 0.666 & 0.211 \\
IGTD & HypoMAP & 1.000 & 0.000 & 1.000 & 0.422 & 0.984 & 0.539 \\
IGTD & ICA & 1.000 & 0.000 & 0.999 & 0.429 & 0.690 & 0.500 \\
IGTD & Tabula Sapiens & 1.000 & 0.000 & 0.996 & 0.463 & 0.393 & 0.207 \\
Genomap & DKD & 1.000 & 1.000 & 0.776 & 1.877 & 0.881 & 0.772 \\
Genomap & GTEx & 1.000 & 1.000 & 0.411 & 2.815 & 0.644 & 0.207 \\
Genomap & HypoMAP & 1.000 & 1.000 & 0.610 & 2.400 & 0.977 & 0.542 \\
Genomap & ICA & 1.000 & 1.000 & 0.712 & 2.221 & 0.668 & 0.474 \\
Genomap & Tabula Sapiens & 1.000 & 1.000 & 0.833 & 1.608 & 0.361 & 0.210 \\
\midrule
\multicolumn{8}{l}{\textbf{Full-Transcriptome}} \\
Random & DKD & 1.000 & 0.000 & 0.252 & 4.409 & 0.942 & 0.805 \\
Random & GTEx & 1.000 & 0.000 & 0.244 & 4.409 & 0.849 & 0.460 \\
Random & HypoMAP & 1.000 & 0.000 & 0.241 & 4.607 & 0.990 & 0.655 \\
Random & ICA & 1.000 & 0.000 & 0.233 & 4.433 & 0.879 & 0.731 \\
Random & Tabula Sapiens & 1.000 & 0.000 & 0.241 & 4.755 & 0.457 & 0.345 \\
DeepInsight (t-SNE) & DKD & 0.489 & 0.482 & 0.943 & 1.958 & 0.954 & 0.887 \\
DeepInsight (t-SNE) & GTEx & 0.446 & 0.504 & 0.867 & 2.643 & 0.855 & 0.506 \\
DeepInsight (t-SNE) & HypoMAP & 0.293 & 0.641 & 0.950 & 2.205 & 0.992 & 0.749 \\
DeepInsight (t-SNE) & ICA & 0.402 & 0.581 & 0.954 & 2.036 & 0.877 & 0.748 \\
DeepInsight (t-SNE) & Tabula Sapiens & 0.255 & 0.644 & 0.946 & 2.336 & 0.427 & 0.247 \\
DeepInsight (UMAP) & DKD & 0.549 & 0.440 & 0.904 & 2.743 & 0.957 & 0.892 \\
DeepInsight (UMAP) & GTEx & 0.262 & 0.655 & 0.802 & 3.204 & 0.846 & 0.489 \\
DeepInsight (UMAP) & HypoMAP & 0.190 & 0.766 & 0.916 & 2.878 & 0.994 & 0.771 \\
DeepInsight (UMAP) & ICA & 0.358 & 0.635 & 0.818 & 3.172 & 0.882 & 0.763 \\
DeepInsight (UMAP) & Tabula Sapiens & 0.070 & 0.795 & 0.917 & 3.008 & 0.428 & 0.267 \\
DeepInsight (PCA) & DKD & 0.116 & 0.422 & 0.765 & 2.791 & 0.958 & 0.846 \\
DeepInsight (PCA) & GTEx & 0.080 & 0.480 & 0.650 & 3.228 & 0.842 & 0.442 \\
DeepInsight (PCA) & HypoMAP & 0.084 & 0.460 & 0.734 & 2.677 & 0.993 & 0.728 \\
DeepInsight (PCA) & ICA & 0.100 & 0.405 & 0.644 & 3.342 & 0.872 & 0.748 \\
DeepInsight (PCA) & Tabula Sapiens & 0.048 & 0.447 & 0.753 & 2.715 & 0.442 & 0.305 \\
MR-DeepInsight & DKD & 0.876 & 0.572 & 0.891 & 2.436 & 0.970 & 0.927 \\
MR-DeepInsight & GTEx & 0.714 & 0.548 & 0.785 & 3.186 & 0.871 & 0.366 \\
MR-DeepInsight & HypoMAP & 0.557 & 0.648 & 0.896 & 2.574 & 0.999 & 0.802 \\
MR-DeepInsight & ICA & 0.625 & 0.679 & 0.901 & 2.481 & 0.899 & 0.822 \\
MR-DeepInsight & Tabula Sapiens & 0.284 & 0.726 & 0.904 & 2.693 & 0.410 & 0.253 \\
FOTOMICS & DKD & 0.288 & 0.636 & 0.900 & 1.767 & 0.954 & 0.852 \\
FOTOMICS & GTEx & 0.195 & 0.696 & 0.798 & 2.547 & 0.833 & 0.418 \\
FOTOMICS & HypoMAP & 0.249 & 0.672 & 0.871 & 1.882 & 0.991 & 0.719 \\
FOTOMICS & ICA & 0.287 & 0.577 & 0.866 & 2.009 & 0.861 & 0.720 \\
FOTOMICS & Tabula Sapiens & 0.196 & 0.610 & 0.810 & 2.296 & 0.460 & 0.295 \\
Refined & DKD & 1.000 & 0.000 & 0.999 & 0.456 & 0.957 & 0.870 \\
Refined & GTEx & 1.000 & 0.000 & 0.995 & 0.523 & 0.847 & 0.488 \\
Refined & HypoMAP & 1.000 & 0.000 & 1.000 & 0.427 & 0.992 & 0.727 \\
Refined & ICA & 1.000 & 0.000 & 1.000 & 0.432 & 0.883 & 0.730 \\
Genomap & DKD & 1.000 & 1.000 & 1.000 & 1.004 & 0.408 & 0.092 \\
\end{longtable}
\endgroup

\section{Statistical Analysis Diagnostics}
\label{sec:appendix_stats}
\textbf{Clustered inference accounts for repeated seeds, but full-transcriptome coverage is uneven.} A cluster is one dataset--method pair, and $G$ denotes the number of such clusters. We compute CR1 cluster-robust standard errors across repeated seeds. The reported $z^*$ is the coefficient divided by its clustered standard error. Two-sided $p$-values use a $t$ distribution with $G-1$ degrees of freedom, written $t_{G-1}$: 34 degrees for the 35 Full clusters and 44 degrees for the 45 HVG clusters. The $q$-values are Benjamini--Hochberg-adjusted $p$-values over the 16 descriptor tests across both outcomes and input scopes. This adjustment accounts for testing many coefficients simultaneously; $q<0.05$ marks results that remain significant at a 5\% false-discovery-rate threshold.

The Full and HVG models use 105 and 135 runs, respectively. Of 270 potential runs, 243 contain both outcomes and 240 also contain all four descriptors. Thirty Full and 40 HVG runs use APPG/CR values imputed by the observed-seed mean within the same dataset--method--scope group. Full coverage is uneven: Genomap contributes three DKD runs, REFINED contributes 12 runs, and IGTD has no complete records. The reported standard errors treat the supplied imputed values as fixed.

\textbf{HVG conclusion: no descriptor reaches $q<0.05$ for F1-Macro; for accuracy, APPG is positive and CR is negative.} LFER and SE do not reach the adjusted threshold (Table~\ref{tab:hvg_stats}). Compared with the Full results, the HVG pattern indicates that the observed descriptor--performance relationships depend on how many genes are represented.

\begin{center}
\begin{minipage}{\linewidth}
\centering
\captionof{table}{\textbf{Beta regression in the HVG setting.} Descriptors enter jointly with dataset-by-seed fixed effects, and SE is clustered by dataset--method. The $p$-values use a $t$ distribution with 44 degrees of freedom; $q$ is the Benjamini--Hochberg-adjusted $p$-value.}
\label{tab:hvg_stats}
\small
\begin{tabular}{llrrrrr}
\toprule
Target & Descriptor & Coef. & Cluster SE & $z^*$ & $p$ & $q$ \\
\midrule
Accuracy & APPG &  0.146 & 0.044 &  3.34 & 0.002 & 0.004 \\
         & CR   & -0.097 & 0.030 & -3.26 & 0.002 & 0.004 \\
         & LFER &  0.061 & 0.061 &  1.00 & 0.320 & 0.394 \\
         & SE   &  0.022 & 0.016 &  1.40 & 0.167 & 0.223 \\
\midrule
F1-Macro & APPG &  0.172 & 0.091 &  1.88 & 0.066 & 0.096 \\
         & CR   & -0.020 & 0.053 & -0.39 & 0.701 & 0.747 \\
         & LFER &  0.014 & 0.076 &  0.19 & 0.852 & 0.852 \\
         & SE   &  0.016 & 0.022 &  0.72 & 0.473 & 0.541 \\
\bottomrule
\end{tabular}
\end{minipage}
\end{center}

\textbf{Full-transcriptome spectral descriptors are collinear, but their positive associations survive key exclusions.} After accounting for fixed effects, the Full VIFs for APPG, CR, LFER, and SE are $1.96$, $3.41$, $11.57$, and $8.43$; the HVG values are $1.27$, $1.01$, $2.22$, and $2.47$. The higher Full LFER and SE values show that the two spectral coefficients are best interpreted together. Their positive Full associations remain when using only directly measured descriptors or excluding Genomap. Allowing dataset-specific precision reduces the APPG and CR associations, while method fixed effects attenuate the spectral associations by focusing the comparison on variation within each method.

\end{document}